\newcommand{\projectpage}[1]{%
  \begin{center}
    \small
    \vspace{-1.25ex}
    \texttt{Project Page}: \url{#1}
  \end{center}
}
\newcommand{\id}{\mathrm{id}} 
\newcommand{\comp}{\circ} 
\newcommand{\Prob}{\mathcal{P}} 
\newcommand{\SigmaAlg}[1]{\mathscr{B}(#1)} 
\newcommand{\Borel}[1]{\mathscr{B}(#1)} 
\newcommand{\Expect}{\mathbb{E}} 
\newcommand{\Ind}{\mathbf{1}} 
\newcommand{\defeq}{\coloneqq} 
\newcommand{\dd}{\mathrm{d}} 
\newcommand{\deltaDirac}[1]{\delta_{#1}} 
\newcommand{\powerset}[1]{\mathcal{P}(#1)} 
\newcommand{\cat}[1]{\texttt{#1}} 
\newcommand{\Stoch}{\cat{Stoch}} 
\newcommand{\MC}{\mathcal{C}} 
\newcommand{\tens}{\otimes} 
\newcommand{\unit}{\mathsf{I}} 
\newcommand{\swap}{\sigma} 
\newcommand{\copyop}{\Delta} 
\newcommand{\delop}{!} 
\newcommand{\KLDiv}{D_{\mathrm{KL}}} 
\newcommand{\TVDist}{d_{\mathrm{TV}}} 
\newcommand{\RenyiDiv}[1]{D_{#1}} 
\newcommand{\FD}{D_f} 
\newcommand{\Info}{I} 
\newcommand{\CatEnt}{\mathcal{H}^{\mathrm{cat}}} 
\newcommand{\AvgCatEnt}{\overline{\mathcal{H}}^{\mathrm{cat}}} 
\newcommand{\ShannonEntropy}{H} 
\newcommand{\CategoricalEntropy}{\CatEnt}
\newcommand{\VocabSpaceMeas}{(\VV, \powerset{\VV})} 
\newcommand{\SeqSpace}{\VV^*} 
\newcommand{\SeqSpaceMeas}{(\SeqSpace, \SigmaAlg{\SeqSpace})} 
\newcommand{\ContextSeqSpace}{\VV^*} 
\newcommand{\ContextSeqSpaceMeas}{\SeqSpaceMeas} 
\newcommand{\RepSpace}{\mathcal{H}} 
\newcommand{\RepSpaceMeas}{(\RepSpace, \SigmaAlg{\RepSpace})} 
\newcommand{\ContextRepSpace}{\mathcal{H}_{\mathrm{seq\_emb}}} 
\newcommand{\ContextRepSpaceMeas}{(\ContextRepSpace, \SigmaAlg{\ContextRepSpace})} 
\newcommand{\Emb}{\mathcal{E}} 
\newcommand{\EmbLayer}{f_{\mathrm{emb}}} 
\newcommand{\Backbone}{f_{\mathrm{bb}}} 
\newcommand{\LMHead}{f_{\mathrm{head}}} 
\newcommand{\Params}{\theta} 
\newcommand{\KernelEmb}{k_{\mathrm{emb}}} 
\newcommand{\KernelBB}{k_{\mathrm{bb}}} 
\newcommand{\KernelLMHead}{k_{\mathrm{head}}} 
\DeclareMathOperator{\softmax}{softmax} 
\title{\huge \bfseries \sffamily A Markov Categorical Framework \\for Language Modeling}
\author{\textbf{Yifan Zhang}\\[1.5mm] Princeton University\\[0.5mm] \texttt{yifzhang@princeton.edu}}
\date{}
\begin{document}
\maketitle

\begin{abstract}
Autoregressive language models achieve remarkable performance, yet a unified theory explaining their internal mechanisms, how training shapes representations, and why these representations support complex behavior remains incomplete. We introduce an analytical framework that models the single-step generation process as a composition of information-processing stages using the language of Markov categories. This compositional perspective connects three aspects of language modeling that are often studied separately: the training objective, the geometry of the learned representation space, and practical model capabilities. First, our framework gives an information-theoretic rationale for parallel drafting methods such as speculative decoding by quantifying the information surplus a hidden state contains about future tokens beyond the immediate next one. Second, we clarify how the standard negative log-likelihood (NLL) objective learns not only a most likely next token, but also the data's intrinsic conditional uncertainty, formalized through categorical entropy. Our main spectral result is conditional: for a linear-softmax head with bounded output features, a calibrated quadratic upper-bound surrogate to NLL induces, after whitening or variance normalization, a generalized CCA/eigenproblem aligning representation directions with predictive prototypes. This gives a compositional lens for understanding how information flows through a model and how likelihood training can shape its internal geometry.
\end{abstract}

\projectpage{https://github.com/yifanzhang-pro/lm-theory}

\section{Introduction}
\label{sec:introduction}

\begin{figure*}[ht!]
\centering
\resizebox{0.98\textwidth}{!}{
\begin{tikzpicture}[
  node distance=1cm and 1.5cm,
  main_flow/.style={rectangle, draw, fill=blue!10, rounded corners, drop shadow, minimum height=3.5em, text centered, text width=5cm, font=\small},
  seq_block/.style={rectangle, draw, fill=blue!20, rounded corners, drop shadow, minimum size=2em},
  concept/.style={rectangle, draw, fill=green!10, rounded corners, text width=6cm, align=center, font=\small},
  analysis/.style={rectangle, draw, fill=violet!15, rounded corners, text width=6cm, align=center, font=\small},
  arrow/.style={-Stealth, very thick, draw=black!80},
  implies/.style={->, thick, draw=black!70, rounded corners},
  shapes/.style={-Stealth, ultra thick, draw=green!80!black, dotted},
  enables/.style={-Stealth, ultra thick, draw=blue!40!white, dotted},
  main_claim/.style={-Stealth, ultra thick, draw=orange!80!white, rounded corners=15pt, line width=1pt}
]

\node[main_flow] (context) at (-11, 0) {$(\mathcal{V}^*)$\\ Context $\mathbf{w}_{<t}$};

\node[seq_block] (seq2) at (-3.7, 0) {};
\node[seq_block, left=0.1cm of seq2] (seq1) {};
\node[seq_block, right=0.1cm of seq2] (seq3) {};
\node[draw=black!40, dashed, rounded corners, inner xsep=1em, inner ysep=0.8em,
  fit=(seq1)(seq3),
  label={[label distance=-0.6em, align=center]below:{($\mathcal{H}_{\mathrm{seq\_emb}})$\\Sequence Embeddings}}]
  (emb_seq_group) {};

\node[seq_block] (h_state2) at (3.7, 0) {}; 
\node[seq_block, left=0.1cm of h_state2] (h_state1) {};
\node[seq_block, right=0.1cm of h_state2] (h_state3) {};
\node[draw=black!40, dashed, rounded corners, inner xsep=1em, inner ysep=0.8em,
  fit=(h_state1)(h_state3),
  label={[label distance=-0.6em, align=center]below:{$(\mathcal{H})$\\Hidden States $h_{t,i}$}}]
  (h_state_group) {};

\node[main_flow] (vocab) at (11, 0) {$(\mathcal{V})$\\ Next Token Dist.};

\draw[arrow] (context.east) -- node[above, yshift=2mm] {$k_{\mathrm{emb}}$} (seq1.west);
\draw[-Stealth, thick, black!60] (seq1.east) -- (seq2.west);
\draw[-Stealth, thick, black!60] (seq2.east) -- (seq3.west);
\draw[arrow] (seq3.east) -- node[above] {$k_{\mathrm{bb}}$} (h_state1.west);
\draw[-Stealth, thick, black!60] (h_state1.east) -- (h_state2.west);
\draw[-Stealth, thick, black!60] (h_state2.east) -- (h_state3.west);
\draw[arrow] (h_state3.east) -- node[above] {$k_{\mathrm{head}}$} (vocab.west);

\node[concept, minimum height=7em] (spectral) at (-6, 5) {
  \textbf{Implicit Spectral Contrastive Learning (\S7)}\\
  NLL sculpts a geometrically structured representation space.
};
\node[concept, minimum height=7em] (stochasticity) at (6, 5) {
  \textbf{Learning Intrinsic Stochasticity (\S5)}\\
  The model learns the data's inherent randomness.
};
\node[draw, fit=(spectral)(stochasticity), rounded corners, inner sep=10pt, label={[font=\bfseries]above:Why it Works: The NLL Objective}] (top_box) {};

\draw[shapes] (top_box.south) -- (emb_seq_group.north);
\draw[shapes] (top_box.south) -- (h_state_group.north);

\node[analysis, minimum height=7em] (infogeo) at (-6, -5) {\textbf{Information Geometry of Representations (\S6)}\\The pullback metric $g^*$ on $\mathcal{H}$ measures predictive sensitivity.
};
\node[analysis, minimum height=7em] (specdec) at (6, -5) {\textbf{Rationale for Speculative Decoding (\S4.1)}\\Quantifies the information surplus in the hidden state $h_t$.
};
\node[draw, fit=(infogeo)(specdec), rounded corners, inner sep=10pt, label={[font=\bfseries]below:What it Reveals: Analysis \& Applications}] (bottom_box) {};

\draw[enables] (h_state_group.south) -- (infogeo.north);
\draw[enables] (h_state_group.south) -- (specdec.north);

\draw[main_claim] (spectral.south) .. controls (-6.5, -1.5) and (-6.5, -3.5) .. (infogeo.north);

\end{tikzpicture}
} 
\caption{A conceptual overview of our framework. \textbf{Center:} The core thesis models the Autoregressive generation step as a composition of Markov kernels $k_{\mathrm{gen}} = k_{\mathrm{head}} \comp k_{\mathrm{bb}} \comp k_{\mathrm{emb}}$ in the category $\Stoch$. This separates the deterministic context encoding ($k_{\mathrm{emb}}, k_{\mathrm{bb}}$) from the probabilistic output \emph{kernel} $k_{\mathrm{head}}$, which is parameterized by a deterministic map $g_{\mathrm{head}}\!:\mathcal H\to\Delta$. \textbf{Top:} This compositional lens reveals the deeper mechanisms of the NLL objective, which we re-frame as minimizing the average KL divergence between the model and true data kernels. Under additional constraints satisfied by linear-softmax LM heads (see \S\ref{sec:repr_learning_theory}), we show a conditional spectral connection with a predictive-similarity operator; in all cases, NLL compels the model to learn intrinsic conditional stochasticity (via categorical entropy). \textbf{Bottom:} Pulling back the Fisher–Rao metric endows $\mathcal{H}$ with an information geometry that quantifies predictive sensitivity and clarifies the information surplus used by speculative decoding.}
\label{fig:overview}
\end{figure*}
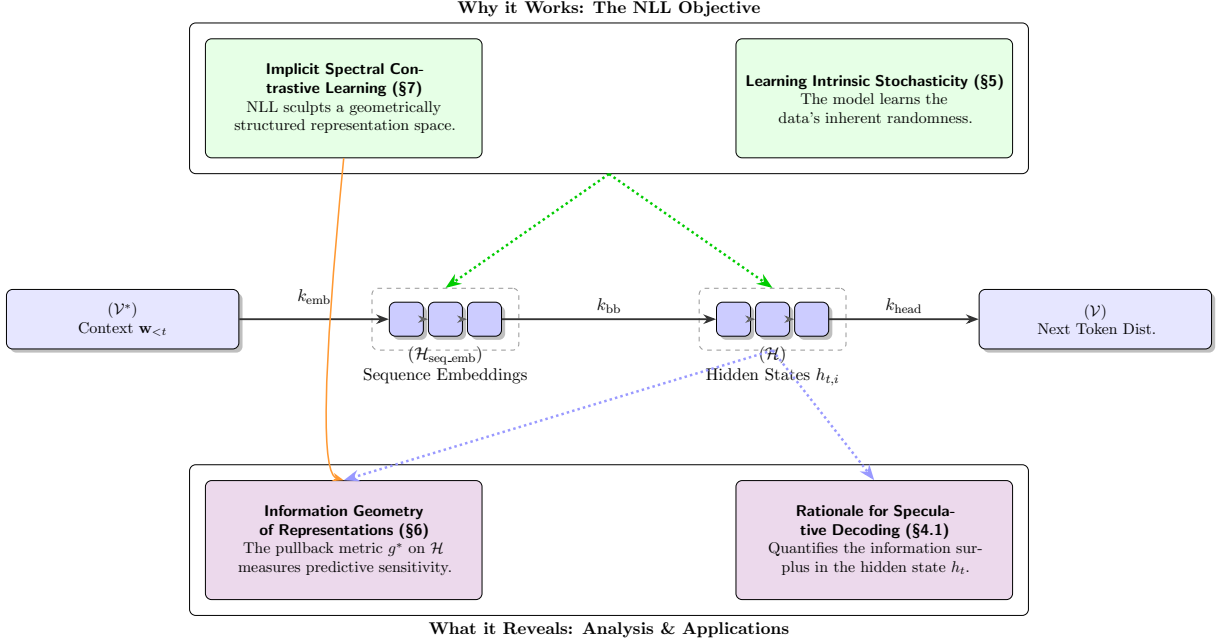

Autoregressive language models (AR LMs), particularly those based on the Transformer architecture \citep{Vaswani2017, Radford2019, Brown2020}, have achieved remarkable success, defining the state-of-the-art in natural language generation and demonstrating impressive few-shot learning capabilities. These models operate by sequentially predicting the next token in a sequence based on the preceding context. Formally, given a sequence $\mathbf{w} = w_1 \dots w_L$ with tokens $w_i$ from a finite vocabulary $\VV$, the model learns a parameterized probability distribution $P_\Params$ that factorizes as:
\begin{align}
P_\Params(\mathbf{w}) = \prod_{t=1}^L P_\Params(w_t | \mathbf{w}_{<t}), \label{eq:ar_factorization}
\end{align}
where $\mathbf{w}_{<t} \defeq w_1 \dots w_{t-1}$ is the context sequence, and $\Params$ denotes the model parameters, typically optimized by minimizing the negative log-likelihood (NLL) on vast text corpora. The core computational step is the mapping from a context $\mathbf{w}_{<t}$ to the conditional probability distribution $P_\Params(\cdot | \mathbf{w}_{<t})$ over $\VV$ for the next token $w_t$.

Despite their empirical triumphs, a deep theoretical understanding of their internal mechanisms remains incomplete~\citep{Manning2020, Elhage2021Circuits, Yuan2022PowerFoundation}. Current analysis often relies on empirical probes \citep{Hewitt2019StructuralProbe} or studies of specific components like attention heads \citep{Olsson2022InContext}. While insightful, these methods can be fragmented and often lack a unified mathematical language to describe the model's compositional and stochastic nature as a whole. A central goal here is not to introduce yet another isolated tool, but to connect well-established tools within a single, compositional language.

Another critical challenge is improving the slow, sequential nature of AR generation. Recent advances in speculative decoding, such as EAGLE~\citep{li2024eagle}, have achieved significant speedups by predicting multiple tokens in parallel, suggesting that the final hidden state $h_t$ contains far more information than is needed for predicting only the single next token $w_t$. However, a formal understanding of this information surplus is lacking.

This paper addresses this gap by introducing a unifying analytical framework for AR LMs. Our central thesis is that the language of Markov Categories (MCs)~\citep{ChoJacobs2019, Fritz2020MC} provides a natural mathematical setting for connecting several concepts that are usually discussed separately: information flow through the model's components, the geometry of the learned representation space, and the structural effects of the NLL training objective.

While many individual mathematical tools we employ, such as the pullback of the Fisher-Rao metric or the connection between NLL and KL divergence, are well-established, our primary contribution is their novel synthesis and application to dissect AR LMs. The originality of this work lies in using the category $\Stoch$ to formally model compositional information flow, leading to new insights uniquely enabled by this perspective. Unlike information-theoretic analyses that treat models as monolithic black boxes analyzing external behavior (e.g., the entropy of the output sequence), our framework uses categorical information theory to analyze the internal transformations and the learned geometry of the representation space at each stage of processing.

This paper introduces an analytical framework focused on the internal mechanics of the AR generation step $\mathbf{w}_{<t} \mapsto P_\Params(\cdot | \mathbf{w}_{<t})$. We leverage the category $\Stoch$, a canonical MC whose objects are standard Borel spaces (like the continuous representation space $\RepSpace \cong \mathbb{R}^{d_{\mathrm{model}}}$) and whose morphisms are Markov kernels \citep{Kallenberg2002, Fritz2020MC}. We formalize the AR generation step as a composite kernel in $\Stoch$:
\begin{equation}
k_{\mathrm{gen}, \theta} \defeq \KernelLMHead \comp \KernelBB \comp \KernelEmb : \ContextSeqSpaceMeas \to \VocabSpaceMeas.\label{eq:k_gen_intro}
\end{equation}
Here, $\KernelEmb$ and $\KernelBB$ represent the typically deterministic context embedding and backbone transformations that produce the final hidden state $h_t \in \RepSpace$, while $\KernelLMHead$ is the \emph{stochastic} kernel induced by a deterministic head map $g_{\mathrm{head}}:\RepSpace\to\Prob(\VV)$, $h\mapsto p_h$; randomness enters only through sampling $W_t\sim p_{h_t}$.

A crucial aspect of our framework is enriching $\Stoch$ with a statistical divergence $D$ (e.g., $\KLDiv$) \citep{BaezFongPollard2016, Perrone2022Ent, Perrone2023Geo}. This allows for defining intrinsic, categorical information measures like entropy $\CategoricalEntropy_D$ and mutual information $\Info_D$ \citep{Perrone2022Ent}, which automatically satisfy the Data Processing Inequality (DPI). Leveraging this unified framework, this paper makes the following contributions:

\begin{enumerate}
\item We formally model the AR generation step as a composite kernel, $k_{\mathrm{gen}, \theta} = \KernelLMHead \comp \KernelBB \comp \KernelEmb$. This compositional structure is a powerful tool for reasoning about how information is transformed, preserved, or lost at each distinct stage of processing.
\item We use categorical information measures, the chain rule for mutual information, and DPI-compatible processing bounds to quantify the information surplus used by methods like EAGLE~\citep{li2024eagle}. This surplus is the information in a hidden state $H_t$ about multiple future tokens beyond the immediate next token. It is an information budget for parallel drafting, not by itself a guarantee of realized speedup.
\item We show how the MC framework unifies three critical interpretations of NLL training under one theoretical roof:
\begin{itemize}
\item NLL as KL minimization, equivalent to optimal source coding.
\item NLL forces the model to learn the data's inherent randomness, a process we formalize using categorical entropy. We show that optimizing NLL implies that the model's learned stochasticity converges to that of the data (Theorem \ref{thm:entropy_convergence}).
\item Under a linear-softmax head with bounded output features, we show that a calibrated quadratic upper-bound surrogate to NLL yields a regression-to-predictive-prototypes objective. After whitening or variance normalization, the associated alignment problem becomes a generalized CCA/eigenproblem. This gives a precise conditional sense in which likelihood training can align representation directions with a predictive-similarity operator (Theorem \ref{thm:nll_spectral_cca}).
\end{itemize}
\end{enumerate}

By formalizing how information is transformed (\Cref{sec:markov-categorical-metrics}), how predictive sensitivity is encoded in representation geometry (\Cref{sec:infogeo}), and how the NLL objective implicitly structures representations (\Cref{sec:repr_learning_theory}), we can move towards more principled approaches to model design, interpretation, and control.

The paper is organized as follows. \Cref{sec:background_mc,sec:ar_model_mc} introduce the MC framework and our compositional model of LMs. \Cref{sec:markov-categorical-metrics} uses the framework to analyze information flow, providing a rationale for speculative decoding. \Cref{sec:pretraining_compression_entropy} connects the NLL objective to learning the data's intrinsic stochasticity. \Cref{sec:infogeo,sec:repr_learning_theory} present our main theoretical result, showing how NLL performs implicit spectral learning by shaping the geometry of the representation space. \Cref{sec:related_work,sec:conclusion} discuss related work and then conclude.

\section{Background}
\label{sec:background_mc}

This section reviews the essential mathematical concepts forming the foundation of our framework: the definition of Markov Categories and the specific category $\Stoch$, followed by the enrichment of $\Stoch$ with statistical divergences leading to categorical information measures.

\subsection{Markov Categories and \texorpdfstring{$\Stoch$}{Stoch}}

Markov Categories provide an axiomatic framework for probability and stochastic processes using category theory \citep{Fritz2020MC}.

\begin{definition}[Markov Category \citep{Fritz2020MC}]
A Markov category $(\MC,\tens,\unit)$ is a symmetric monoidal category in which each object $X$ is equipped with a commutative comonoid structure $(\copyop_X:X\to X\tens X,\delop_X:X\to\unit)$, compatible with the monoidal product. The discard maps are natural: for every morphism $f:X\to Y$, $\delop_Y\comp f=\delop_X$. Equivalently, in the causal case relevant here, the monoidal unit $\unit$ is terminal. The copy maps are \emph{not} natural for arbitrary stochastic morphisms; morphisms satisfying $\copyop_Y\comp f=(f\tens f)\comp\copyop_X$ are the deterministic morphisms.
\end{definition}

Morphisms $k:X\to Y$ are interpreted as stochastic processes. Composition $h\comp k$ is sequential processing, while $k\tens h$ is parallel processing. The comonoid maps $\copyop_X$ (copy) and $\delop_X$ (discard) abstractly model duplication and deletion of information. Copying before and after a genuinely stochastic map need not agree; this mismatch is exactly what categorical entropy measures. The causality axiom enforces probability normalization ($\int k(x,\dd y)=1$) in concrete examples like $\Stoch$. States (probability distributions) on an object $X$ are represented as morphisms $p:\unit\to X$.

The key example for our purposes is the category $\Stoch$.

\begin{definition}[Category $\Stoch$ \citep{Fritz2020MC, Perrone2022Ent}]
The Markov category $\Stoch$ is defined by:
\begin{itemize}
\item \textbf{Objects}: Standard Borel spaces $(X, \SigmaAlg{X})$. These are general measure spaces that include finite sets (like a vocabulary $\VV$), countable sets, and continuous spaces like Euclidean space $\mathbb{R}^d$ or other Polish spaces. This ensures the framework can handle both discrete tokens and continuous representations. The monoidal unit $\unit$ is a singleton space $(\{\star\}, \{\emptyset, \{\star\}\})$.
\item \textbf{Morphisms}: Markov kernels $k: X \to Y$. A map $k: X \times \SigmaAlg{Y} \to [0, 1]$ where $k(x, \cdot)$ is a probability measure on $Y$ for each $x \in X$, and $k(\cdot, A)$ is a measurable function on $X$ for each $A \in \SigmaAlg{Y}$.
\item \textbf{Composition}: Given $k: X \to Y$ and $h: Y \to Z$, the composite $h \comp k: X \to Z$ is $(h \comp k)(x, C) \defeq \int_Y h(y, C) \, k(x, \dd y)$ (Chapman-Kolmogorov). Identity $\id_X(x, A) = \deltaDirac{x}(A)$.
\item \textbf{Monoidal Product ($\tens$)}: Product space $(X \times Y, \SigmaAlg{X} \tens \SigmaAlg{Y})$ with the product $\sigma$-algebra. Product kernel $(k \tens h)((x, y), \cdot) \defeq k(x, \cdot) \tens h(y, \cdot)$ (product measure).
\item \textbf{Symmetry}: Swap map $\swap_{X,Y}: X \tens Y \to Y \tens X$ is $\swap_{X,Y}((x, y), \cdot) = \deltaDirac{(y, x)}$.
\item \textbf{Comonoid Structure}: Copy $\copyop_X: X \to X \tens X$ is $\copyop_X(x, \cdot) = \deltaDirac{(x, x)}$. Discard $\delop_X: X \to \unit$ maps to the unique point measure on $\unit$, $\delop_X(x, \{\star\}) = 1$.
\item \textbf{Causality}: $\unit$ is terminal, $\delop_Y \comp k = \delop_X$ holds, reflecting probability normalization.
\end{itemize}
\end{definition}
\begin{remark}[Interpretation]
In $\Stoch$, objects represent the types of random outcomes (e.g., sequences, vectors, tokens). Morphisms represent stochastic processes or channels mapping inputs to probability distributions over outputs. Deterministic functions $f: X \to Y$ correspond to deterministic kernels $k_f(x, \cdot) = \deltaDirac{f(x)}$. States $p: \unit \to X$ correspond bijectively to probability measures $\mu_p \in \Prob(X)$ via $\mu_p(A) = p(\star, A)$. Marginalization arises from discarding information, e.g., for a joint state $p: \unit \to X \tens Y$, the $X$-marginal is $p_X = (\id_X \tens \delop_Y) \comp p$.
\end{remark}

\begin{lemma}[Standard Borel and measurability]
\label{lem:standard-borel}
The spaces $\VV^*$ (countable disjoint union of finite products), $\ContextRepSpace$ (countable disjoint union of Euclidean products), $\RepSpace\simeq\mathbb R^{d_{\mathrm{model}}}$, and $\VV$ (finite) are standard Borel. If $\EmbLayer,\Backbone$ are Borel-measurable, then the induced deterministic kernels $\KernelEmb,\KernelBB$ are morphisms in $\Stoch$.
\end{lemma}
\begin{proof}[Sketch]
Countable disjoint unions of Polish spaces are standard Borel; products/sums preserve standard Borelness. Deterministic kernels defined by Borel maps are measurable morphisms in $\Stoch$.
\end{proof}

\subsection{Divergence Enrichment and Categorical Information Measures}

The structure of $\Stoch$ is particularly powerful when enriched with a statistical divergence $D$, quantifying the dissimilarity between probability measures (states) $p, q: \unit \to X$, written $D_X(p \| q)$ \citep{Perrone2022Ent}. Examples include KL divergence ($\KLDiv$), Total Variation ($\TVDist$), Rényi divergences ($\RenyiDiv{\alpha}$), and the broad class of $f$-divergences ($\FD$) \citep{AmariNagaoka2000, Nowozin2016fGAN}.

A fundamental property linking divergences and Markov kernels is the Data Processing Inequality (DPI), which holds for most standard divergences, including $f$-divergences and the usual Rényi divergences in their standard DPI ranges.

\begin{theorem}[Data Processing Inequality (DPI)]
Let $D$ be a statistical divergence satisfying the DPI. For any Markov kernel $k: X \to Y$ in $\Stoch$ and any pair of states $p, q: \unit \to X$:
\begin{equation}
D_Y(k \comp p \| k \comp q) \le D_X(p \| q)
\label{eq:dpi_states_revised}
\end{equation}
Processing through the channel $k$ cannot increase the $D$-divergence between the distributions.
\end{theorem}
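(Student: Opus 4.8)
The plan is to recognize the categorical inequality \eqref{eq:dpi_states_revised} as precisely the classical data‑processing inequality for $D$ transcribed into the internal language of $\Stoch$, so that it follows at once from the standing hypothesis that $D$ satisfies the DPI. The only genuine work is to check that the categorical operation appearing on the left—post‑composing a state $p:\unit\to X$ with a kernel $k:X\to Y$—coincides with the classical action of the channel $k$ on the corresponding probability measure.

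First, I would invoke the state–measure correspondence recorded in the Remark on $\Stoch$: a state $p:\unit\to X$ is the same datum as a probability measure $\mu_p\in\Prob(X)$ via $\mu_p(A)=p(\star,A)$, and likewise $q\leftrightarrow\mu_q$. Second, using the Chapman–Kolmogorov composition law of $\Stoch$, I would compute, for $B\in\SigmaAlg{Y}$, $(k\comp p)(\star,B)=\int_X k(x,B)\,p(\star,\dd x)=\int_X k(x,B)\,\mu_p(\dd x)$, which is exactly the output distribution of the channel $k$ on the input $\mu_p$; write it as $k_*\mu_p$. That $B\mapsto(k\comp p)(\star,B)$ is a well‑defined probability measure (measurability of $k(\cdot,B)$, countable additivity via monotone convergence, total mass one from $\delop_Y\comp k=\delop_X$) is immediate from the axioms of a Markov kernel on standard Borel spaces; the same holds for $q$. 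Thus $k\comp p$ and $k\comp q$ are legitimate states corresponding to $k_*\mu_p$ and $k_*\mu_q$.

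With this dictionary fixed, the left side of \eqref{eq:dpi_states_revised} is $D_Y(k_*\mu_p\,\|\,k_*\mu_q)$ and the right side is $D_X(\mu_p\,\|\,\mu_q)$, so the asserted inequality is verbatim the statement that $D$ is non‑increasing under the channel $k$—i.e. that $D$ satisfies the DPI—which is the hypothesis. (For the concrete divergences named in the paper, $f$‑divergences and hence $\KLDiv$ and $\TVDist$, and Rényi $\RenyiDiv{\alpha}$ for $\alpha\in[0,\infty]$, this is the classical theorem, which one proves from joint convexity of $(\mu,\nu)\mapsto \FD(\mu\,\|\,\nu)$ together with Jensen's inequality applied to the disintegration of the joint law $\mu_p\tens k$ along its $Y$‑marginal; I would only gesture at this, since the theorem as stated assumes the DPI rather than asks us to establish it.) I do not expect a substantive obstacle: the result is a repackaging of a standard fact, and the single point needing care is the identification of categorical composition with the channel action under the usual measure‑theoretic bookkeeping, which is exactly what the standard Borel / Markov‑kernel setting of $\Stoch$ supplies.
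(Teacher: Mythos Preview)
Your proposal is correct: the theorem as stated is essentially a restatement of its own hypothesis in categorical language, and the only content is the dictionary identifying $k\comp p$ with the pushforward $k_*\mu_p$, which you supply. The paper itself offers no proof at all for this statement---it is presented as background with a citation---so your observation that the result is a direct transcription of the assumed DPI is exactly the right level of argument.
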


\noindent\textit{Remark.} For Rényi divergences, standard DPI statements cover $\alpha\in(0,\infty]$ under the usual absolute-continuity conventions, with KL recovered as the limit $\alpha\to1$ and max-divergence recovered as $\alpha\to\infty$ when finite. We use $\alpha\in(0,\infty)$ as the default and invoke endpoint cases only when the required absolute-continuity conditions are explicit.

\vspace{-0.5ex}
\begin{assumption}[Standing assumptions]\label{assump:standing}
We work with: (i) objects that are standard Borel; (ii) Borel-measurable $\EmbLayer,\Backbone$, hence deterministic kernels in $\Stoch$; (iii) finite vocabulary $\VV$; (iv) a deterministic parameterization $g_{\mathrm{head}}:\RepSpace\to\Prob(\VV)$ that is differentiable on a full-measure subset under $p_{H_t}$; (v) finite second moments $\mathbb E\|H_t\|^2<\infty$; (vi) where invoked (e.g., \Cref{thm:entropy_convergence}), well-posed weak convergence of the relevant joint laws; and (vii) unless stated otherwise, analysis restricted to the \emph{interior} of the simplex (i.e., $p_h(w)>0$ for all $w$) on a full-measure set, so that Fisher–Rao and second-order KL expansions are valid.
For near-boundary behavior, we work on a high-probability subset where $p_h(w)\ge \delta>0$ and carry $\delta$-dependent constants.
\end{assumption}

\paragraph{Preliminaries: NLL-KL equivalence and distance conventions.}
We recall the classical identity that minimizing cross-entropy is equivalent to minimizing the average KL divergence between data and model conditionals. Throughout, we use the Hellinger distance with the single global convention
\[
\;d_H^2(p,q)\;\defeq\;1-\sum_{w\in\VV}\sqrt{p(w)q(w)}
\;=\;\tfrac12\sum_{w\in\VV}\big(\sqrt{p(w)}-\sqrt{q(w)}\big)^2\,,
\]
On finite alphabets, we will use the bounds
\[
d_H^2(p,q)\ \le\ 1-e^{-\KLDiv(p\|q)/2}\ \le\ \tfrac12\,\KLDiv(p\|q).
\]
(Equivalently, \(1-\sum_w\sqrt{p(w)q(w)}\le 1-e^{-\KLDiv(p\|q)/2}\le \tfrac12\,\KLDiv(p\|q)\).)
We use $\mathrm{TV}(p,q)=\tfrac12\|p-q\|_1$ when needed, with $\mathrm{TV}^2(p,q)\le \tfrac12\,\KLDiv(p\|q)$.  This convention is used consistently in all subsequent sections.

We provide a full proof of the NLL-KL equivalence in Appendix~\ref{app:proof_nll_kl_equivalence}.

\begin{theorem}[NLL Minimization as Average KL Minimization]
\label{thm:nll_kl_equivalence}
It is a well-known result in information theory that minimizing the cross-entropy loss $L_{\mathrm{CE}}(\theta)$ is equivalent to minimizing the average KL divergence between the true and model conditional distributions. We state it here in the language of our framework to ground the subsequent analysis.
\begin{equation}
\mathcal{L}_{\mathrm{KL}}(\theta)
\defeq
\Expect_{\mathbf{w}_{<t} \sim p_{W_{<t}}}\!\Big[
\KLDiv\!\big( k_{\text{data}}(\mathbf{w}_{<t}, \cdot) \,\|\, k_{\mathrm{gen}, \theta}(\mathbf{w}_{<t}, \cdot) \big)
\Big],
\qquad
\argmin_{\theta} L_{\mathrm{CE}}(\theta) = \argmin_{\theta} \mathcal{L}_{\mathrm{KL}}(\theta).
\label{eq:kl_objective_kernels_revisited}
\end{equation}
The expectation is taken over contexts $\mathbf{w}_{<t}$ drawn according to the data's marginal context distribution $p_{W_{<t}}$. The minimum value of $\mathcal{L}_{\mathrm{KL}}(\theta)$ is non-negative. If the model class $\{ k_{\mathrm{gen}, \theta} \mid \theta \in \Theta \}$ is sufficiently expressive to contain $k_{\text{data}}$ (i.e., $k_{\text{data}} = k_{\mathrm{gen}, \theta_{\text{true}}}$ for some $\theta_{\text{true}} \in \Theta$), then the minimum value is 0, achieved if and only if $k_{\mathrm{gen}, \theta^*}(\mathbf{w}_{<t}, \cdot) = k_{\text{data}}(\mathbf{w}_{<t}, \cdot)$ for $p_{W_{<t}}$-almost every context $\mathbf{w}_{<t}$.
\end{theorem}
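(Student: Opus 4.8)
The plan is to unwind the cross-entropy loss into a sum of a data-intrinsic term and the quantity $\mathcal{L}_{\mathrm{KL}}(\theta)$, using the tower property of conditional expectation to align the expectation over $P_{\text{data}}$ with the expectation over the marginal context distribution $p_{W_{<t}}$.

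First I would write $L_{\mathrm{CE}}(\Params)$ as an iterated expectation: $L_{\mathrm{CE}}(\Params) = -\Expect_{\mathbf{w}_{<t} \sim p_{W_{<t}}} \Expect_{w_t \sim k_{\text{data}}(\mathbf{w}_{<t}, \cdot)} [\log P_\Params(w_t | \mathbf{w}_{<t})]$, since the joint distribution $P_{\text{data}}$ over pairs $(\mathbf{w}_{<t}, w_t)$ factors as $p_{W_{<t}}$ on the context followed by the conditional kernel $k_{\text{data}}$. Then, for each fixed context, I would add and subtract the data entropy term $\Expect_{w_t \sim k_{\text{data}}}[\log P_{\text{data}}(w_t | \mathbf{w}_{<t})]$, which rearranges the inner expectation into $\ShannonEntropy(k_{\text{data}}(\mathbf{w}_{<t}, \cdot)) + \KLDiv(k_{\text{data}}(\mathbf{w}_{<t}, \cdot) \,\|\, k_{\mathrm{gen}, \theta}(\mathbf{w}_{<t}, \cdot))$, using the definitions of Shannon entropy and KL divergence on the finite space $\VV$ together with the identification $P_\Params(\cdot | \mathbf{w}_{<t}) = k_{\mathrm{gen}, \theta}(\mathbf{w}_{<t}, \cdot)$ from \Cref{sec:ar_model_mc}. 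Taking the outer expectation over $p_{W_{<t}}$ gives $L_{\mathrm{CE}}(\Params) = \ShannonCondEntropy(W_t \mid W_{<t}) + \mathcal{L}_{\mathrm{KL}}(\theta)$, where the first term is independent of $\theta$. Hence the two minimization problems have the same argmin set.

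Next I would establish the claims about the optimal value. Non-negativity of $\min_\theta \mathcal{L}_{\mathrm{KL}}(\theta)$ is immediate since it is an expectation of KL divergences, each of which is non-negative by Gibbs' inequality (or as a special case of the DPI against the terminal object). For the realizability claim, if $k_{\text{data}} = k_{\mathrm{gen}, \theta_{\text{true}}}$ then plugging in $\theta_{\text{true}}$ makes every integrand zero, so the minimum is $0$. Conversely, a zero expectation of a non-negative integrand forces the integrand to vanish $p_{W_{<t}}$-almost everywhere, which is exactly $k_{\mathrm{gen}, \theta^*}(\mathbf{w}_{<t}, \cdot) = k_{\text{data}}(\mathbf{w}_{<t}, \cdot)$ for $p_{W_{<t}}$-a.e. $\mathbf{w}_{<t}$, using the fact that $\KLDiv(p \| q) = 0$ iff $p = q$ for probability measures on a finite space.

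This statement is essentially a classical identity, so there is no serious obstacle; the only point requiring mild care is the measure-theoretic bookkeeping in the disintegration step, namely justifying that $P_{\text{data}}$ on $\ContextSeqSpace \times \VV$ disintegrates as $p_{W_{<t}}$ composed with $k_{\text{data}}$ and that the function $\mathbf{w}_{<t} \mapsto \KLDiv(k_{\text{data}}(\mathbf{w}_{<t}, \cdot) \,\|\, k_{\mathrm{gen}, \theta}(\mathbf{w}_{<t}, \cdot))$ is measurable so the outer expectation is well-defined. Both follow from the standard Borel structure of the spaces involved (guaranteeing existence of regular conditional probabilities, as noted in \Cref{sec:ar_model_mc}) and from $\VV$ being finite, which makes the KL divergence a continuous — hence measurable — function of the two probability vectors. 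I would state these as routine consequences rather than belaboring them, since the theorem is explicitly framed as a restatement of a well-known result to ground the later analysis.
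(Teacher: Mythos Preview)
Your proposal is correct and follows essentially the same route as the paper: both arguments write $L_{\mathrm{CE}}(\Params)$ as an iterated expectation over $p_{W_{<t}}$ and $k_{\text{data}}$, split the inner term into the data conditional entropy plus the KL divergence, and conclude from the $\theta$-independence of the entropy term and the standard properties of $\KLDiv$. Your additional remarks on the disintegration of $P_{\text{data}}$ and the measurability of the context-wise KL map go slightly beyond what the paper spells out, but they are appropriate and do not alter the argument.
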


Perrone \citep{Perrone2022Ent} introduced categorical definitions of entropy and mutual information intrinsically tied to the divergence $D$ and the MC structure.

\begin{definition}[Categorical Entropy \citep{Perrone2022Ent}]
Let $(\Stoch, D)$ be enriched with a DPI-satisfying divergence $D$.
\begin{enumerate}
\item The \emph{pointwise categorical entropy} of a kernel $k: X \to Y$ is the function $\CatEnt_D(k):X\to\mathbb{R}_{\ge0}$ given by
\begin{equation}
\CatEnt_D(k)(x) \defeq D_{Y \tens Y} \!\left( \copyop_Y \comp k(x,\cdot) \;\middle\| \; (k \tens k) \comp \copyop_X(x,\cdot) \right).
\label{eq:cat_entropy_def_pointwise}
\end{equation}
Intuitively, it compares (1) applying $k$ then copying the output vs. (2) copying $x$ then applying $k$ independently. If $k$ is deterministic, $\CatEnt_D(k)(x)=0$.

\item Given a prior/state $p_X:\unit\to X$, the \emph{state-averaged categorical entropy} is the scalar
\begin{equation}
\AvgCatEnt_D(k;p_X)\;\defeq\;\Expect_{x\sim p_X}\big[\CatEnt_D(k)(x)\big] =\int_X \CatEnt_D(k)(x)\,p_X(\dd x).
\label{eq:cat_entropy_def_avg}
\end{equation}

\item The \emph{Categorical Mutual Information} for a joint state $p: \unit \to X \tens Y$ is $\Info_D(p) \defeq D_{X \tens Y} ( p \| p_X \tens p_Y )$.
\end{enumerate}
\end{definition}

\begin{remark}[Properties and Connections]
\label{rem:cat_mi_def}
When $D = \KLDiv$ and the output object $Y$ is \emph{finite} (in particular, $Y=\VV$), $\Info_{\KLDiv}(p)$ recovers Shannon mutual information. In this discrete setting, the pointwise categorical entropy satisfies $\CatEnt_{\KLDiv}(k)(x)=H\!\big(k(x,\cdot)\big)$, and the averaged quantity satisfies $\AvgCatEnt_{\KLDiv}(k;p_X)=\mathbb E_{x\sim p_X}\!\big[H(k(x,\cdot))\big]=H(Y\mid X)$. For non-atomic $Y$, the diagonal law $\copyop_Y\comp k(x,\cdot)$ is typically singular with respect to $(k\otimes k)\comp\copyop_X(x,\cdot)$, so $\CatEnt_{\KLDiv}(k)(x)$ is generally $+\infty$ unless $k(x,\cdot)$ is purely atomic; deterministic kernels are the zero-entropy special case.
\end{remark}

\section{Autoregressive Language Models as Composed Kernels}
\label{sec:ar_model_mc}

We now apply the Markov Category framework established in \Cref{sec:background_mc} to model Autoregressive language models. Specifically, we model the single-step generation mapping $\mathbf{w}_{<t} \mapsto P_\Params(\cdot | \mathbf{w}_{<t})$ as a composition of Markov kernels within the category $\Stoch$.

The relevant measurable spaces (objects in $\Stoch$) are:
\begin{itemize}
\item Input context space: $\ContextSeqSpaceMeas = (\VV^*, \SigmaAlg{\VV^*})$, where $\VV^*$ is the set of finite sequences over the vocabulary $\VV$, equipped with a suitable $\sigma$-algebra making it standard Borel (e.g., considering it as a disjoint union of finite products $\VV^n$).
\item Initial sequence representation space: $\ContextRepSpaceMeas = (\ContextRepSpace, \SigmaAlg{\ContextRepSpace})$, the space of initial vector sequences (e.g., $\bigcup_n (\mathbb{R}^{d_{\mathrm{model}}})^n$), also equipped with a standard Borel structure.
\item Final hidden state space: $\RepSpaceMeas = (\RepSpace, \SigmaAlg{\RepSpace})$, typically $(\mathbb{R}^{d_{\mathrm{model}}}, \Borel{\mathbb{R}^{d_{\mathrm{model}}}})$.
\item Output vocabulary space: $\VocabSpaceMeas = (\VV, \powerset{\VV})$, a finite measurable space.
\end{itemize}
Standard Borel spaces are chosen because they form a well-behaved class of measurable spaces (isomorphic to Borel subsets of Polish spaces) closed under countable products, sums, and containing standard examples like $\mathbb{R}^d$ and finite sets, ensuring measure-theoretic regularity \citep{Kallenberg2002}.

The generation process decomposes into three kernels (morphisms in $\Stoch$):

1. \textbf{Embedding Layer Kernel ($\KernelEmb: \ContextSeqSpaceMeas \to \ContextRepSpaceMeas$)}:
This kernel encapsulates the initial processing of the discrete input sequence $\mathbf{w}_{<t} \in \VV^*$. It typically involves applying a token embedding function $\Emb: \VV \to \mathbb{R}^{d_{\mathrm{model}}}$ to each token $w_i$ and potentially incorporating absolute positional encodings. Let $\EmbLayer: \VV^* \to \ContextRepSpace$ denote the overall deterministic function computing the initial sequence representation $E_{<t}$. Since this mapping is deterministic, the kernel $\KernelEmb$ is defined via the Dirac measure $\deltaDirac{\cdot}$:
\begin{equation}
\KernelEmb(\mathbf{w}_{<t}, A) \defeq \deltaDirac{\EmbLayer(\mathbf{w}_{<t})}(A) = \Ind_A(\EmbLayer(\mathbf{w}_{<t})), \quad \text{for } A \in \SigmaAlg{\ContextRepSpace}.
\end{equation}
This is a valid morphism in $\Stoch$.

2. \textbf{Backbone Transformation Kernel ($\KernelBB: \ContextRepSpaceMeas \to \RepSpaceMeas$)}:
This kernel represents the core computation, usually a deep neural network like a Transformer stack. Let $\Backbone: \ContextRepSpace \to \RepSpace$ be the function mapping the initial sequence representation $E_{<t}$ to the final hidden state $h_t \in \RepSpace$ (often the output vector at the last sequence position). This function incorporates complex operations like multi-head self-attention and feed-forward layers. Relative positional information, such as Rotary Position Embeddings (RoPE) \citep{Su2021RoFormer}, is implemented within the function $\Backbone$ by modifying attention computations based on token positions. Assuming the backbone computation is deterministic for a given $E_{<t}$ and parameters $\Params$, the kernel $\KernelBB$ is also deterministic:
\begin{equation}
\KernelBB(E_{<t}, B) \defeq \deltaDirac{\Backbone(E_{<t})}(B) = \Ind_B(\Backbone(E_{<t})), \quad \text{for } B \in \SigmaAlg{\RepSpace}.
\end{equation}
This is also a morphism in $\Stoch$.

3. \textbf{LM Head Kernel ($\KernelLMHead: \RepSpaceMeas \to \VocabSpaceMeas$)}:
This final step maps the summary hidden state $h_t \in \RepSpace$ to a probability distribution over the finite vocabulary $\VV$. Typically, $h_t$ is passed through a linear layer ($\LMHead: \RepSpace \to \mathbb{R}^{|\VV|}$) producing logits $\mathbf{z} = \LMHead(h_t)$, followed by the $\softmax$ function: $P(w | h_t) = [\softmax(\mathbf{z})]_w$. This defines a Markov kernel induced by a deterministic map into the simplex:
\begin{equation}
\KernelLMHead(h, A) \defeq \sum_{w \in A} [\softmax(\LMHead(h))]_w \quad \text{for } h \in \RepSpace, A \subseteq \VV.
\end{equation}
This kernel maps each point $h$ in the representation space to a probability measure on the discrete space $\VV$, satisfying the required measurability conditions. It is a generally non-deterministic morphism in $\Stoch$ that is parameterized by a deterministic map $g_{\mathrm{head}}:\RepSpace\to\Prob(\VV)$.

\begin{remark}[Deterministic parameterization vs.\ stochastic kernel]
The head is best seen as a deterministic map $g_{\mathrm{head}}:\RepSpace\to\Prob(\VV)$, $h\mapsto p_h$, parameterizing a stochastic kernel via $k_{\mathrm{head}}(h,\cdot)=p_h$. ``Learned stochasticity'' refers to the model learning distributions $p_h$ that match the data’s conditional uncertainty; the kernel itself is not a deterministic kernel $X\to\VV$.
\end{remark}

The overall single-step generation kernel $k_{\mathrm{gen}, \theta}: \ContextSeqSpaceMeas \to \VocabSpaceMeas$ is the composition $\KernelLMHead \comp \KernelBB \comp \KernelEmb$ in the category $\Stoch$. This composition precisely represents the model's learned conditional probability map $P_\Params(\cdot | \mathbf{w}_{<t})$. It is crucial to note that this formalism applies to \textbf{any} AR model, including Transformers. The attention mechanism provides a powerful, history-dependent parameterization of this single-step kernel. The subsequent sections use this representation to analyze the model's behavior.

\section{Information-Theoretic Analysis via Categorical Metrics}
\label{sec:markov-categorical-metrics}

The MC framework allows us to define principled metrics for internal analysis and to formally reason about information flow. We focus on two key applications that are central to the paper's main arguments: quantifying the information surplus exploited by speculative decoding and measuring the intrinsic stochasticity of the prediction head.

We operate within the probabilistic setting induced by a distribution $P_{\text{ctx}}$ over input contexts, corresponding to an initial state $p_{W_{<t}}: \unit \to \ContextSeqSpaceMeas$. Processing this state through the composed kernels induces distributions over the hidden state $H_t$ (state $p_{H_t}: \unit \to \RepSpaceMeas$) and the next token $W_t$ (state $p_{W_t}: \unit \to \VocabSpaceMeas$).

\subsection{Information Flow Bounds and Rationale for Speculative Decoding}
\label{ssec:speculative_decoding}

\paragraph{Setting.}
Transformers condition on the entire prefix, so the effective source is $\infty$-order. To avoid finite-order assumptions, we quantify multi-token predictability using conditional mutual-information (MI) tails.

Let $W_{t:t+K-1}=(W_t,\ldots,W_{t+K-1})$ and let the hidden summary be $H_t=\phi(W_{<t})$. The chain rule gives
\begin{equation}
\Info(H_t;W_{t:t+K-1})
\;=\; \Info(H_t;W_t)\;+\;\Info\!\big(H_t;W_{t+1:t+K-1}\mid W_t\big).
\label{eq:chain_rule_surplus}
\end{equation}
We call the second term the \emph{information surplus}
\begin{equation}
\mathsf{Surplus}_K \;\defeq\; \Info\!\big(H_t;W_{t+1:t+K-1}\mid W_t\big),
\label{eq:info_surplus_conditional}
\end{equation}
which satisfies $0\le \mathsf{Surplus}_K\le H(W_{t+1:t+K-1}\!\mid W_t)\le (K-1)\log|\VV|$. It is convenient to further decompose
\[
a_k \;\defeq\; \Info\!\big(H_t; W_{t+k}\mid W_{t:t+k-1}\big)\ \ \ge 0,\qquad k\ge1,
\]
so that $\mathsf{Surplus}_K$ is a tail sum of nonnegative per-step contributions.

\begin{proposition}[Tail sum and decay]\label{prop:surplus_decay}
For any AR LM with $H_t=\phi(W_{<t})$,
\[
\mathsf{Surplus}_K \;=\; \sum_{k=1}^{K-1} a_k, \qquad a_k\ge 0.
\]
Hence $K\mapsto \mathsf{Surplus}_K$ is nondecreasing and $\mathsf{Surplus}_K\le (K-1)\log|\VV|$. If there exists a nonincreasing envelope $\psi$ with $a_k\le \psi(k)$ and $\sum_{k\ge1}\psi(k)<\infty$, then $\mathsf{Surplus}_K\!\to\!\mathsf{Surplus}_\infty$ and
\[
0\le \mathsf{Surplus}_\infty-\mathsf{Surplus}_K \;\le\; \sum_{k\ge K}\psi(k).
\]
In particular, if $\psi(k)\le C\rho^k$ for some $C<\infty$ and $\rho\in(0,1)$, then
$\mathsf{Surplus}_\infty-\mathsf{Surplus}_K \le \frac{C\,\rho^{K}}{1-\rho}$.
\end{proposition}

\begin{corollary}[Finite memory as a special case]\label{cor:surplus_finite_memory}
If the source is $m$-th order Markov and $H_t=\phi(W_{t-m:t-1})$, then $a_k=0$ for all $k\ge m$. Thus no new surplus is added after offset $m-1$, and $\mathsf{Surplus}_K$ is constant for all $K\ge m$.
\end{corollary}

Natural language exhibits long-tailed dependence, but the tail often decays. A concise control metric is the $\varepsilon$-effective surplus length
\[
K_\varepsilon \;\defeq\; \min\{K\ge1:\ \mathsf{Surplus}_\infty-\mathsf{Surplus}_K \le \varepsilon\},
\]
which sets a principled draft length for speculative decoding under a tolerated missed-information budget $\varepsilon$. Empirically, once $\mathsf{Surplus}_K$ flattens, longer drafts yield diminishing returns unless the architecture changes.

\noindent\textbf{Two-path view of speculative decoding.}
Drafting and verification are parallel kernels on the same hidden state:
\begin{align*}
\xymatrix{
\ContextSeqSpace \hspace{3ex} \ar[r]^-{k_{\mathrm{enc}} \defeq \KernelBB \comp \KernelEmb} & \hspace{3ex} \RepSpace \ar[r]^-{k_{\mathrm{verify}}} \ar[d]_-{k_{\mathrm{draft}}} & \VV \\
& \VV^K &
}
\end{align*}
Here $k_{\mathrm{verify}}\equiv \KernelLMHead$ outputs the next-token distribution, while $k_{\mathrm{draft}}$ proposes $K$ tokens in parallel. Because these kernels have different codomains, their raw categorical entropies are not directly comparable. A meaningful diagnostic should normalize the draft entropy per token or condition sequentially, and should be interpreted together with the verifier's acceptance probability. Effective drafting requires both nonzero surplus $\mathsf{Surplus}_K$ and a draft kernel that converts this surplus into high-probability proposals.

Beyond DPI, Equation~\eqref{eq:chain_rule_surplus} gives the conditional MI budget $\mathsf{Surplus}_K$ available to parallel proposals. This quantity bounds the information available to longer drafts and helps explain where returns can saturate. Realized speedup still depends on the draft model, verifier, and acceptance rule.

\subsection{Metric: LM Head Categorical Entropy (Prediction Stochasticity)}

We quantify the intrinsic stochasticity or uncertainty associated with the final prediction step, embodied by the LM head kernel $\KernelLMHead: \RepSpaceMeas \to \VocabSpaceMeas$.
This metric is crucial for understanding how NLL training forces the model to learn the data's inherent randomness (\Cref{sec:pretraining_compression_entropy}). For a given input $h \in \RepSpace$, the categorical entropy quantifies the stochasticity of the output distribution $\KernelLMHead(h, \cdot)$. We obtain a single summary statistic by averaging this value over the distribution of hidden states $p_{H_t}$.

\begin{definition}[Categorical Entropy of $\KernelLMHead$]
Using \cref{eq:cat_entropy_def_pointwise} with $X = \RepSpace$, $Y = \VV$, and $k = \KernelLMHead$, the pointwise categorical entropy is
\begin{equation}
\CatEnt_D(\KernelLMHead)(h) \defeq D_{\VV \tens \VV} \!\left( \copyop_\VV \comp \KernelLMHead \;\middle\| \; (\KernelLMHead \tens \KernelLMHead) \comp \copyop_\RepSpace \right)(h).
\label{eq:entropy_khead_revised}
\end{equation}
\end{definition}

The categorical entropy $\CatEnt_D(\KernelLMHead)(h)$ measures the divergence between generating a correlated pair $(W, W)$ versus an independent pair $(W_1, W_2)$, where $W, W_1, W_2 \sim \KernelLMHead(h, \cdot)$. This quantifies how far the output distribution for a given $h$ is from a deterministic point mass. To obtain a single metric for the LM head's overall stochasticity, we compute its expectation with respect to the hidden state distribution $p_{H_t}$:
\begin{equation}
\AvgCatEnt_D(\KernelLMHead; p_{H_t}) \defeq \Expect_{h \sim p_{H_t}} \left[ D_{\VV \tens \VV}\left(\sum_{w \in \VV} \KernelLMHead(h, \{w\}) \deltaDirac{(w,w)} \quad \| \quad \KernelLMHead(h, \cdot) \tens \KernelLMHead(h, \cdot) \right) \right].
\label{eq:avg_cat_entropy}
\end{equation}

\noindent\textbf{Interpretation.} This metric measures the intrinsic conditional stochasticity of the LM head mapping. If $\KernelLMHead$ were deterministic (i.e., for each $h$, it mapped to a single specific $w_h$, so $p_h = \deltaDirac{w_h}$), then both measures inside the divergence would be $\deltaDirac{(w_h, w_h)}$, and the entropy would be $D(\deltaDirac{(w_h,w_h)} \| \deltaDirac{(w_h,w_h)}) = 0$. A higher value of $\AvgCatEnt_D(\KernelLMHead; p_{H_t})$ indicates greater average uncertainty or ``spread'' in the output distribution $p_h = \KernelLMHead(h, \cdot)$, meaning the kernel is inherently more stochastic. It quantifies how far the prediction process is from a deterministic assignment, measured in the geometry of $\VV \tens \VV$ induced by $D$.  In the special case $D=\KLDiv$ and finite $\VV$, $\AvgCatEnt_{\KLDiv}(\KernelLMHead;p_{H_t})\le \log|\VV|$ is automatically bounded; this bound underwrites the continuity argument used later.

For the specific case $D=\KLDiv$ and finite $\VV$, the categorical entropy equals the Shannon (conditional) entropy:
\begin{proposition}[KL case reduces to Shannon]
\label{prop:cat_equals_shannon}
For the LM head kernel $k(h,\cdot)=p_h$ over a finite vocabulary,
\[
\CatEnt_{\KLDiv}(\KernelLMHead)(h)=\ShannonEntropy(p_h),
\qquad
\AvgCatEnt_{\KLDiv}(\KernelLMHead; p_{H_t})
\,=\,\ShannonEntropy(W_t\mid H_t).
\]
\end{proposition}
\begin{proof}
Since $\sum_{w} p_h(w)\delta_{(w,w)}$ is supported on the diagonal, a direct calculation gives \\
$D_{\VV\tens\VV}\!\left(\sum_w p_h(w)\delta_{(w,w)} \;\middle\|\; p_h\!\otimes p_h\right)
=-\sum_w p_h(w)\log p_h(w)$, and averaging over $H_t$ yields $H(W_t\mid H_t)$.
\end{proof}

\section{Pretraining Objective, Compression, and Learning Intrinsic Stochasticity}
\label{sec:pretraining_compression_entropy}

A central question surrounding large language models is how the seemingly simple Autoregressive objective of next-token prediction, trained via minimizing cross-entropy loss (equivalently, negative log-likelihood or NLL), sculpts representations. The framework of Markov Categories and categorical entropy provides a lens through which to interpret this phenomenon, connecting it to compression and to matching the inherent conditional uncertainty of the data generating process.

Let $k_{\text{data}}: \ContextSeqSpaceMeas \to \VocabSpaceMeas$ be the (potentially unknown) Markov kernel representing the true data-generating process, such that $k_{\text{data}}(\mathbf{w}_{<t}, \cdot)$ corresponds to the true conditional probability measure $P_{\text{data}}(\cdot | \mathbf{w}_{<t})$ on the vocabulary $\VV$. Let $p_{W_{<t}}$ denote the marginal probability measure on the context space $\ContextSeqSpaceMeas$, derived from the underlying joint distribution $P_{\text{data}}$ over sequences observed in the training corpus.

The standard pretraining objective for an AR LM parameterized by $\theta$ is to minimize the negative log-likelihood (NLL) of the next token $w_t$ given the preceding context $\mathbf{w}_{<t}$, averaged over the training data distribution $P_{\text{data}}$. This is equivalent to minimizing the average KL divergence between the data kernel and the model kernel (\Cref{thm:nll_kl_equivalence}).
\begin{equation}
\mathcal{L}_{\mathrm{KL}}(\theta) = \Expect_{\mathbf{w}_{<t} \sim p_{W_{<t}}} [ \KLDiv( k_{\text{data}}(\mathbf{w}_{<t}, \cdot) \,\|\, k_{\mathrm{gen}, \theta}(\mathbf{w}_{<t}, \cdot) ) ]
\end{equation}
where $P_\Params(\cdot | \mathbf{w}_{<t}) = k_{\mathrm{gen}, \theta}(\mathbf{w}_{<t}, \cdot)$ and the expectation is taken over contexts $\mathbf{w}_{<t}$ drawn according to the data's marginal context distribution $p_{W_{<t}}$.

This theorem frames NLL training as driving the model kernel $k_{\mathrm{gen}, \theta}$ to match the data kernel $k_{\text{data}}$.
The connection to compression arises from Shannon's source coding theorem. The minimal average code length required to losslessly encode the next token $w_t$, given the context $\mathbf{w}_{<t}$ and using an optimal code based on the true distribution $P_{\text{data}}(\cdot | \mathbf{w}_{<t})$, is the conditional Shannon entropy $H(W_t | W_{<t})_{\text{data}}$. The cross-entropy loss $L_{\mathrm{CE}}(\Params)$ achieved by the model represents the average code length when using a code based on the model's distribution $P_\Params(\cdot | \mathbf{w}_{<t})$. Therefore, minimizing NLL (\cref{eq:kl_objective_kernels_revisited}) is equivalent to finding a model that provides the most efficient compression of the training data sequences, achieving an average code length that approaches the theoretical minimum $H(W_t | W_{<t})_{\text{data}}$. The widely discussed hypothesis that ``compression implies understanding'' posits that achieving high compression rates on complex data like natural language necessitates learning the underlying structure, rules, and statistical regularities, which may manifest as emergent capabilities.

Beyond matching the predictive distributions point-wise on average, successful NLL training implies that the model also learns to replicate the intrinsic stochasticity or uncertainty inherent in the data generation process at the prediction step. Within our framework, this intrinsic conditional stochasticity can be quantified using the concept of average categorical entropy (\cref{eq:avg_cat_entropy}). Recall the definition in Equation~\eqref{eq:avg_cat_entropy}. This quantity quantifies the average ``spread'' or non-determinism of the kernel $k$ under the input distribution $p_X$.

Let $k_{\text{head}, \theta}: \RepSpace \to \VocabSpaceMeas$ be the LM head kernel corresponding to parameters $\theta$. Let $p_{H_t, \theta}$ be the distribution over hidden states $h_t \in \RepSpace$ induced by processing contexts $\mathbf{w}_{<t} \sim p_{W_{<t}}$ through the model's encoder $\KernelBB \comp \KernelEmb$ (parameterized by $\theta$).

NLL training aligns the model's conditional uncertainty with that of the data via two ingredients: (i) KL control of per-context discrepancies (hence Hellinger control on finite vocabularies), and (ii) uniform continuity of the head-entropy functional on the compact simplex.

\begin{lemma}[Continuity of the average head-entropy functional]
\label{lem:cat_entropy_continuity}
Let $\VV$ be finite and define
\[
\Psi_D(p)\defeq D_{\VV\tens\VV}\!\big(\sum_w p(w)\delta_{(w,w)} \,\|\, p\tens p\big),
\qquad p\in\Delta^{|\VV|-1}.
\]
Assume $\Psi_D$ is continuous (hence bounded and uniformly continuous on $\Delta^{|\VV|-1}$). Let $(P^{(n)})_{n\ge1}$ and $P^\star$ be random variables in $\Delta^{|\VV|-1}$. If
\[
\mathbb E\!\big[d_H(P^{(n)},P^\star)^2\big]\to 0,
\]
then
\[
\mathbb E\big[\Psi_D(P^{(n)})\big]\to \mathbb E\big[\Psi_D(P^\star)\big].
\]
For $D=\KLDiv$, $\Psi_{\KLDiv}(p)=\ShannonEntropy(p)$ and $0\le \Psi_{\KLDiv}(p)\le \log|\VV|$.
\end{lemma}

\begin{theorem}[Convergence of average categorical entropy under NLL minimization]
\label{thm:entropy_convergence}
Let $X\sim p_{W_{<t}}$ be a random context and write
\[
p_X(\cdot)\defeq k_{\text{data}}(X,\cdot),\qquad q_{X,\theta}(\cdot)\defeq k_{\mathrm{gen},\theta}(X,\cdot).
\]
Assume realizability: there exists $\theta^\star$ such that $q_{X,\theta^\star}=p_X$ almost surely (equivalently, $\mathcal L_{\mathrm{KL}}(\theta^\star)=0$). Let $(\theta_n)$ satisfy $\mathcal L_{\mathrm{KL}}(\theta_n)\to 0$. If $\Psi_D$ from Lemma~\ref{lem:cat_entropy_continuity} is continuous, then
\[
\lim_{n\to\infty}\AvgCatEnt_D(k_{\text{head},\theta_n};\,p_{H_t,\theta_n})
= \Expect_X\big[\Psi_D(p_X)\big].
\]
In particular, for $D=\KLDiv$,
\[
\lim_{n\to\infty}\AvgCatEnt_{\KLDiv}(k_{\text{head},\theta_n};\,p_{H_t,\theta_n})
=\Expect_X[\ShannonEntropy(p_X)]
=H(W_t\mid W_{<t})_{\text{data}}.
\]
Since $H=f_{\mathrm{enc},\theta^\star}(W_{<t})$ is a deterministic function of $W_{<t}$, always $H(W_t\mid H)\ge H(W_t\mid W_{<t})$. At any realizable optimum $\theta^\star$, the data kernel factors through $H$, hence $H$ is predictively sufficient and $H(W_t\mid H)=H(W_t\mid W_{<t})_{\text{data}}$.

\end{theorem}

\Cref{thm:entropy_convergence} provides a formal basis for the claim that NLL training \emph{lets the model learn} not just the most likely next token, but also the degree of uncertainty or stochasticity associated with that prediction, as dictated by the data. By minimizing the average KL divergence $\mathcal{L}_{\mathrm{KL}}(\theta)$, the model $k_{\mathrm{gen}, \theta}$ must align its output distributions $k_{\mathrm{gen}, \theta}(\mathbf{w}_{<t}, \cdot)$ with the data distributions $k_{\text{data}}(\mathbf{w}_{<t}, \cdot)$. This alignment necessarily includes matching the ``shape'' or ``spread'' of these distributions, which is precisely what is quantified by the average categorical entropy $\AvgCatEnt_D$. The parameters $\Params$ and the compositional structure $\KernelLMHead \comp \KernelBB \comp \KernelEmb$ thus become a compressed representation capturing both the predictive dependencies and the inherent conditional randomness of the language source. This suggests that learning the correct level of stochasticity is an integral part of the compression process driven by the NLL objective, contributing to the model's ability to generate realistic and diverse text sequences.

\section{Information Geometry of Representation and Prediction Spaces}
\label{sec:infogeo}

The Markov Category framework, particularly $(\Stoch, D)$ enriched with a divergence like $\KLDiv$, provides a natural bridge to Information Geometry \citep{AmariNagaoka2000, Perrone2023Geo}. This allows for a geometric analysis of the spaces involved in AR language modeling, particularly the representation space $\RepSpace$ and the space of next-token distributions $\Prob(\VV)$.

\begin{remark}[Gauge and invariances]\label{rem:gauge}
The pullback $g^*=g_{\text{head}}^*g^{\text{FR}}$ is invariant to adding a constant to logits and, in whitened coordinates with $C_{hh}=I$, to \emph{orthogonal} reparameterizations $h=Q\tilde h$ (then $g^*(h)=Q^\top g^*(\tilde h)Q$). General $GL(d)$ changes act by congruence and thus distort eigenvalues. When comparing spectra across checkpoints or layers, it is therefore natural to work in whitened coordinates or report coordinate-free summaries such as eigenvalue ratios and principal subspace overlaps.
\end{remark}

The space $\Prob(\VV)$ of probability distributions over the finite vocabulary $\VV$ forms a $(|\VV|-1)$-dimensional simplex $\Delta^{|\VV|-1}$. This space possesses a well-defined Riemannian geometry induced by the Fisher-Rao information metric $g^{\text{FR}}$, whose components in a local coordinate system $\xi = (\xi_1, \dots, \xi_{|\VV|-1})$ for a distribution $p_\xi \in \Prob(\VV)$ are given by:
\begin{equation}
g^{\text{FR}}_{ij}(\xi) = \sum_{w \in \VV} p_\xi(w) \frac{\partial \log p_\xi(w)}{\partial \xi_i} \frac{\partial \log p_\xi(w)}{\partial \xi_j} = \Expect_{W \sim p_\xi} \left[ \frac{\partial \log p_\xi(W)}{\partial \xi_i} \frac{\partial \log p_\xi(W)}{\partial \xi_j} \right].
\label{eq:fisher_rao}
\end{equation}
This metric quantifies the local distinguishability between nearby probability distributions, measuring the distance in terms of expected squared log-likelihood ratio gradients. The geometry of $\Prob(\VV)$ also includes dual affine connections ($\pm \alpha$-connections) related to the KL divergence, providing a richer dually flat structure \citep{AmariNagaoka2000}.

The LM Head kernel $\KernelLMHead: \RepSpaceMeas \to \VocabSpaceMeas$ corresponds to a deterministic mapping from a hidden state $h \in \RepSpace \cong \mathbb{R}^{d_{\mathrm{model}}}$ to a probability distribution $p_h \defeq \KernelLMHead(h, \cdot) \in \Prob(\VV)$. Let $g_{\text{head}}: \RepSpace \to \Prob(\VV)$ denote this mapping, $p_h = g_{\text{head}}(h)$. Typically, this involves a linear layer followed by softmax: $g_{\text{head}}(h) = \softmax(W h)$ where $W \in \mathbb{R}^{|\VV| \times d_{\mathrm{model}}}$. This mapping $g_{\text{head}}$ allows us to pull back the geometric structure from $\Prob(\VV)$ onto the representation space $\RepSpace$.

To avoid collision with the token random variables $W_t$, we denote the output weight matrix by $W_{\mathrm{out}}\in\mathbb{R}^{|\VV|\times d_{\mathrm{model}}}$ below, so $g_{\text{head}}(h)=\softmax(W_{\mathrm{out}}h+b)$.

Specifically, the Fisher-Rao metric $g^{\text{FR}}$ on $\Prob(\VV)$ induces a positive-semidefinite pullback tensor $g^* = g_{\text{head}}^* g^{\text{FR}}$ on $\RepSpace$. It is a genuine Riemannian metric only on directions where the head map has full local rank; otherwise it is a degenerate metric tensor. At a point $h \in \RepSpace$, its components are given by:
\begin{equation}
g^*_{ab}(h) = \sum_{i,j} g^{\text{FR}}_{ij}(g_{\text{head}}(h)) \frac{\partial (g_{\text{head}}(h))_i}{\partial h_a} \frac{\partial (g_{\text{head}}(h))_j}{\partial h_b}, \quad a, b \in \{1, \dots, d_{\mathrm{model}}\},
\label{eq:pullback_metric}
\end{equation}
where $h_a, h_b$ are coordinates of $h \in \RepSpace$, and $(g_{\text{head}}(h))_i, (g_{\text{head}}(h))_j$ represent local coordinates of the output distribution $p_h \in \Prob(\VV)$ (e.g., probabilities of specific tokens, possibly excluding one due to the sum-to-one constraint). The term $\frac{\partial (g_{\text{head}}(h))_i}{\partial h_a}$ is the Jacobian of the LM head map $g_{\text{head}}$ evaluated at $h$.

Let $J(h)$ denote this Jacobian matrix ($|\VV|-1 \times d_{\mathrm{model}}$ or $|\VV| \times d_{\mathrm{model}}$ depending on coordinates). Then $g^*(h) = J(h)^{\top} g^{\text{FR}}(g_{\text{head}}(h)) J(h)$.
The significance of this pullback metric $g^*$ lies in its connection to the local distinguishability of output distributions under perturbations of the input hidden state, as measured by divergences like KL divergence.

\begin{theorem}[Pullback Metric and Local Divergence]
Assume $p_h\in\mathrm{int}\,\Delta^{|\VV|-1}$ (i.e., $p_h(w)>0$ for all $w$; see Assumption~\ref{assump:standing}).
Let $g_{\text{head}}: \RepSpace \to \Prob(\VV)$ be the smooth map corresponding to the LM head kernel. Let $h \in \RepSpace$ and $v \in T_h\RepSpace \cong \RepSpace$. Consider the distributions $p_h = g_{\text{head}}(h)$ and $p_{h+\epsilon v} = g_{\text{head}}(h+\epsilon v)$ for small $\epsilon$. The KL divergence between these output distributions, for small $\epsilon$, is locally approximated by the quadratic form defined by the pullback metric $g^*(h)$:
\begin{equation}
\KLDiv(p_{h+\epsilon v} \,\|\, p_h) = \frac{1}{2} \epsilon^2 g^*(h)(v, v) + O(\epsilon^3)
\label{eq:gstar_kl_relation}
\end{equation}
where $g^*(h)(v, v) = \sum_{a,b=1}^{d_{\mathrm{model}}} g^*_{ab}(h) v_a v_b$. A similar relationship holds for symmetric KL divergence. More generally, for a sufficiently smooth $f$-divergence, the leading term is $\frac{f''(1)}{2}\epsilon^2 g^*(h)(v,v)$.
\end{theorem}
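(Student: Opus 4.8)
The plan is to reduce \eqref{eq:gstar_kl_relation} to the classical second-order Taylor expansion of $f$-divergences along a smooth curve in $\Prob(\VV)$, and then to transport that expansion onto $\RepSpace$ through the map $g_{\text{head}}$ by the chain rule. I will lean on two standard ingredients: (i) the classical local expansion of an $f$-divergence on the simplex, whose quadratic part is $f''(1)$ times the Fisher--Rao form (for $f(t)=t\log t$ this is the statement that the Hessian of $\KLDiv(\cdot\,\|\,q)$ at $q$ is the Fisher information matrix); and (ii) the fact that $g_{\text{head}}(h)=\softmax(Wh)$ is smooth with strictly positive components, so every likelihood ratio appearing below stays in a fixed neighborhood of $1$ and each Taylor expansion is unconditionally valid --- since $\VV$ is finite there is no differentiation-under-the-integral to justify.

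First I would establish the base case on the simplex. Fixing local coordinates $\xi=(\xi_1,\dots,\xi_{|\VV|-1})$ on $\Prob(\VV)$ (say the probabilities of the first $|\VV|-1$ tokens) and writing $\ell(\xi,w)=\log p_\xi(w)$, I would set $u_w \defeq \bigl(p_{\xi+\delta}(w)-p_\xi(w)\bigr)/p_\xi(w)=O(\|\delta\|)$, expand $f(1+u)=f'(1)u+\tfrac12 f''(1)u^2+O(u^3)$ using $f(1)=0$, and invoke the normalization identity $\sum_w p_\xi(w)u_w=\sum_w\bigl(p_{\xi+\delta}(w)-p_\xi(w)\bigr)=0$ to kill the term linear in $\delta$. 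This leaves
\begin{equation}
D_f(p_{\xi+\delta}\,\|\,p_\xi)=\frac{f''(1)}{2}\sum_w \frac{\bigl(p_{\xi+\delta}(w)-p_\xi(w)\bigr)^2}{p_\xi(w)}+O(\|\delta\|^3)=\frac{f''(1)}{2}\sum_{i,j} g^{\text{FR}}_{ij}(\xi)\,\delta_i\delta_j+O(\|\delta\|^3),
\end{equation}
where the last step inserts $p_{\xi+\delta}(w)-p_\xi(w)=\sum_i \partial_i p_\xi(w)\,\delta_i+O(\|\delta\|^2)$ and recognizes $\sum_w \partial_i p_\xi(w)\,\partial_j p_\xi(w)/p_\xi(w)=\sum_w p_\xi(w)\,\partial_i\ell\,\partial_j\ell=g^{\text{FR}}_{ij}(\xi)$ from \eqref{eq:fisher_rao}. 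For $f(t)=t\log t$ one has $f''(1)=1$, recovering the coefficient $1/2$; symmetric KL and other smooth $f$-divergences with $f''(1)>0$ differ only through the constant $f''(1)$.

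Next I would pull this back along $g_{\text{head}}$. Smoothness gives $\delta \defeq g_{\text{head}}(h+\epsilon v)-g_{\text{head}}(h)=\epsilon\,J(h)v+O(\epsilon^2)$, hence $\|\delta\|=O(\epsilon)$ and $O(\|\delta\|^3)=O(\epsilon^3)$; substituting into the base case, $\sum_{i,j} g^{\text{FR}}_{ij}(p_h)\,\delta_i\delta_j=\epsilon^2\,(J(h)v)^{\top} g^{\text{FR}}(p_h)(J(h)v)+O(\epsilon^3)$, since the cross term between the $O(\epsilon)$ and $O(\epsilon^2)$ parts of $\delta$ is $O(\epsilon^3)$ and the square of the $O(\epsilon^2)$ part is $O(\epsilon^4)$. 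The point that makes the $\epsilon^2$ coefficient clean is that the divergence has no term linear in $\delta$ --- exactly the cancellation used above --- so the Hessian of $g_{\text{head}}$ cannot contribute at order $\epsilon^2$. Recognizing $(J(h)v)^{\top} g^{\text{FR}}(g_{\text{head}}(h))(J(h)v)=\sum_{a,b} g^*_{ab}(h)v_av_b=g^*(h)(v,v)$ via \eqref{eq:pullback_metric} then yields $\KLDiv(p_{h+\epsilon v}\,\|\,p_h)=\tfrac12\epsilon^2 g^*(h)(v,v)+O(\epsilon^3)$, and identically $D_f(p_{h+\epsilon v}\,\|\,p_h)=\tfrac{f''(1)}{2}\epsilon^2 g^*(h)(v,v)+O(\epsilon^3)$.

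I expect the main obstacle to be bookkeeping rather than any conceptual hurdle: one must verify that the $O(\|\delta\|^3)$ remainder in the base case is uniform on a neighborhood --- which follows from $\min_w p_h(w)>0$, so the ratios $p_{h+\epsilon v}(w)/p_h(w)$ lie in a compact set on which the third derivative of $f$ is bounded --- and then track that the $O(\epsilon^2)$ Taylor remainder of $g_{\text{head}}$ only feeds the $O(\epsilon^3)$ term of the divergence, which is precisely where the vanishing of the linear-in-$\delta$ term is essential. A secondary, routine point is coordinate-independence: the computation lives in a chart, but $g^{\text{FR}}$ is a genuine tensor and $J(h)$ transforms as a Jacobian, so $g^*(h)$ is well-defined independently of the chart; when $J(h)$ has rank less than $|\VV|-1$ (e.g. $d_{\mathrm{model}}>|\VV|-1$, or $W$ not of full rank) the quadratic form $g^*$ is merely positive semidefinite, consistent with the ``generally degenerate'' statement in the theorem.
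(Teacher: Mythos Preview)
Your proposal is correct and follows essentially the same route as the paper's proof: cite or derive the second-order expansion of the divergence on $\Prob(\VV)$ in terms of $g^{\text{FR}}$, then Taylor-expand $g_{\text{head}}(h+\epsilon v)$ and substitute, identifying the quadratic coefficient as $J(h)^\top g^{\text{FR}} J(h) = g^*(h)$. If anything you are more careful than the paper --- you explicitly derive the base-case expansion (showing the linear term dies by normalization), explain why the $O(\epsilon^2)$ piece of the $g_{\text{head}}$ expansion cannot leak into the $\epsilon^2$ coefficient of the divergence, and address uniformity of the remainder via strict positivity of softmax outputs, none of which the paper spells out.
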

\begin{proof}
Let $\xi$ be a local coordinate system for $\Prob(\VV)$ around $p_h$. The KL divergence between two nearby distributions $p_\xi$ and $p_{\xi'}$ can be expanded around $p_\xi$ as \citep{AmariNagaoka2000}:
\[ \KLDiv(p_{\xi'} \,\|\, p_\xi) = \frac{1}{2} \sum_{i,j} g^{\text{FR}}_{ij}(\xi) (\xi'_i - \xi_i) (\xi'_j - \xi_j) + O(\|\xi' - \xi\|^3). \]
Let $\xi(h)$ denote the coordinates of $p_h = g_{\text{head}}(h)$. For $p_{h+\epsilon v}$, the coordinates are $\xi(h+\epsilon v)$. By Taylor expansion in $\epsilon$:
\[ \xi_i(h+\epsilon v) = \xi_i(h) + \epsilon \sum_{a=1}^{d_{\mathrm{model}}} \frac{\partial \xi_i}{\partial h_a}(h) v_a + O(\epsilon^2). \]
Thus, $\xi_i(h+\epsilon v) - \xi_i(h) = \epsilon J_{ia}(h) v_a + O(\epsilon^2)$, where $J(h)$ is the Jacobian matrix of the map $h \mapsto \xi(h)$ (i.e., the Jacobian of $g_{\text{head}}$ in local coordinates $\xi$). Substituting this into the KL expansion:
\begin{align*}
\KLDiv(p_{h+\epsilon v} \,\|\, p_h) &= \frac{1}{2} \sum_{i,j} g^{\text{FR}}_{ij}(\xi(h)) \left( \epsilon \sum_a J_{ia}(h) v_a \right) \left( \epsilon \sum_b J_{jb}(h) v_b \right) + O(\epsilon^3) \\
&= \frac{1}{2} \epsilon^2 \sum_{a,b} \left( \sum_{i,j} J_{ia}(h) g^{\text{FR}}_{ij}(\xi(h)) J_{jb}(h) \right) v_a v_b + O(\epsilon^3) \\
&= \frac{1}{2} \epsilon^2 \sum_{a,b} (J(h)^{\top} g^{\text{FR}}(\xi(h)) J(h))_{ab} v_a v_b + O(\epsilon^3).
\end{align*}
The term $J(h)^{\top} g^{\text{FR}}(\xi(h)) J(h)$ is precisely the matrix representation of the pullback metric $g^*(h)$ in the standard coordinates of $\RepSpace \cong \mathbb{R}^{d_{\mathrm{model}}}$, derived from \cref{eq:pullback_metric}. Thus, $\KLDiv(p_{h+\epsilon v} \,\|\, p_h) = \frac{1}{2} \epsilon^2 g^*(h)(v, v) + O(\epsilon^3)$. The result for other well-behaved $f$-divergences follows from their similar second-order expansion involving $g^{\text{FR}}$.
\end{proof}

This theorem formally establishes that the pullback metric $g^*$ measures how sensitive the output distribution $p_h$ is to infinitesimal changes in the hidden state $h$, where sensitivity is gauged by the local divergence (specifically, KL divergence, relating to the Fisher-Rao metric) in the output space $\Prob(\VV)$.

\begin{lemma}[Softmax--linear head: closed form of $g^*$]\label{lem:softmax_closed_form}
Suppose $p_h=\softmax(W_{\mathrm{out}}h+b)$ with $W_{\mathrm{out}}\in\mathbb R^{|\VV|\times d}$. Then
\[
g^*(h)\;=\;W_{\mathrm{out}}^\top\!\left[\mathrm{Diag}(p_h)-p_h p_h^\top\right]\!W_{\mathrm{out}}.
\]
\end{lemma}
\begin{proof}
For softmax logits $z_w=\langle W_{\mathrm{out},w},h\rangle+b_w$,
$\nabla_h\log p_h(w)=W_{\mathrm{out}}^\top(e_w-p_h)$.
Therefore $g^*(h)=\mathbb E_{W\sim p_h}[\nabla_h\log p_h(W)\nabla_h\log p_h(W)^\top]=W_{\mathrm{out}}^\top(\mathrm{Diag}(p_h)-p_hp_h^\top)W_{\mathrm{out}}$.
\end{proof}

\begin{lemma}[FR–Lipschitzness and Hellinger control]\label{lem:lipschitz_head}
Let $p_h=\softmax(W_{\mathrm{out}}h+b)$ with $W_{\mathrm{out}}\in\mathbb R^{|\VV|\times d}$. Then for all $h$,
\[
g^*(h)
=W_{\mathrm{out}}^\top\!\big(\mathrm{Diag}(p_h)-p_h p_h^\top\big)W_{\mathrm{out}}
\ \preceq\ \frac{\|W_{\mathrm{out}}\|_2^2}{2}\,I_d.
\]
Consequently, the head map $g_{\mathrm{head}}:h\mapsto p_h$ is globally Lipschitz from Euclidean to Fisher--Rao:
\[
d_{\mathrm{FR}}\!\big(p_h,p_{h'}\big)\ \le\ \frac{\|W_{\mathrm{out}}\|_2}{\sqrt{2}}\;\|h-h'\|_2,\qquad \forall\,h,h'.
\]
Moreover, using $d_H(p,q)\le d_{\mathrm{FR}}(p,q)/(2\sqrt{2})$ on the simplex,
\[
d_H\!\big(p_h,p_{h'}\big)\ \le\ \frac{\|W_{\mathrm{out}}\|_2}{4}\,\|h-h'\|_2,\qquad \forall\,h,h'.
\]
\end{lemma}

\begin{remark}[FR–Hellinger relation]
On the probability simplex, $d_{\mathrm{FR}}(p,q)=2\arccos\!\sum_w\sqrt{p(w)q(w)}$ and
$d_H(p,q)=\sqrt{1-\sum_w\sqrt{p(w)q(w)}}=\sqrt{2}\,\sin\!\big(d_{\mathrm{FR}}(p,q)/4\big)$.
Thus $d_H(p,q)\le d_{\mathrm{FR}}(p,q)/(2\sqrt{2})$ by $\sin x\le x$, and also $d_{\mathrm{FR}}(p,q)\le \pi\sqrt{2}\,d_H(p,q)$ by $\sin x\ge 2x/\pi$ on $x\in[0,\pi/2]$.
\end{remark}
\begin{proof}[Proof sketch]
By Lemma~\ref{lem:softmax_closed_form}, $g^*(h)=W_{\mathrm{out}}^\top(\mathrm{Diag}(p_h)-p_hp_h^\top)W_{\mathrm{out}}$. The covariance factor $\mathrm{Diag}(p_h)-p_hp_h^\top$ has spectral norm at most $1/2$, hence $g^*(h)\preceq \frac{\|W_{\mathrm{out}}\|_2^2}{2}I_d$. For any $h,h'$, consider the straight path $h_t=(1-t)h+t h'$. The Fisher--Rao distance is upper bounded by the length of the image path: $d_{\mathrm{FR}}(p_h,p_{h'})\le \int_0^1 \sqrt{(h'-h)^\top g^*(h_t)(h'-h)}\,\dd t \le \frac{\|W_{\mathrm{out}}\|_2}{\sqrt2}\|h-h'\|_2$. The Hellinger bound follows from $d_H\le d_{\mathrm{FR}}/(2\sqrt2)$.
\end{proof}

\begin{remark}[Weight tying]
If the head ties weights with the input embedding ($W_{\mathrm{out}}=E^\top$), bounds and spectra involving $W_{\mathrm{out}}$ inherit the input-embedding geometry. In particular, in whitened coordinates (\Cref{cor:whitened_case}), the principal directions of $g^*$ align with the principal directions of $E$’s covariance weighted by $p_h$.
\end{remark}

\begin{remark}[Local vs. global distances]
For small displacements, the FR geodesic distance agrees to second order with the quadratic approximation in \Cref{eq:gstar_kl_relation}. For larger moves, one must integrate along a curve; the straight-line path in $\RepSpace$ yields an upper bound on the FR distance between $g_{\mathrm{head}}(h)$ and $g_{\mathrm{head}}(h')$.
At \emph{interior} points of the simplex, FR geodesic distance and (symmetric) KL are locally equivalent (second order). Near the boundary, restrict attention to a high-probability subset where $p_h(w)\ge \delta>0$ so that constants in the local equivalence remain controlled; equivalently, operate with tempered/clipped logits to maintain interiority.
\end{remark}

\begin{remark}[Pullback Metric as Expected Score Outer Product]
The Fisher-Rao metric $g^{\text{FR}}$ is the expected outer product of the score function $\nabla_\xi \log p_\xi(W)$. This property pulls back to $\RepSpace$. Let $p_h(w) = \KernelLMHead(h, \{w\})$. The score vector for token $w$ with respect to the representation is $\nabla_h \log p_h(w) \in \RepSpace$. The pullback metric tensor is precisely the expected outer product of this score:
\begin{equation}
g^*(h) = \Expect_{W \sim p_h} [(\nabla_h \log p_h(W)) (\nabla_h \log p_h(W))^{\top}].
\label{eq:gstar_score}
\end{equation}
This directly connects the information geometry of $\RepSpace$ to the sensitivity of log-probabilities to changes in the representation $h$. This score vector $\nabla_h \log p_h(W)$ is analogous to that used in score-based generative models, but here taken with respect to the conditioning variable $h$.

\noindent\textbf{Connection to optimization.} Under standard regularity, $g^*(h)$ coincides with the Fisher information (and Gauss–Newton) matrix of the per-step NLL with respect to $h$, furnishing a direct link between second-order training dynamics and the pullback geometry on $\RepSpace$.

\end{remark}

The rank of the pullback metric depends on the dimensions of the spaces involved.
\begin{proposition}[Rank of the Pullback Metric]
\label{prop:rank_pullback_metric}
The rank of the pullback Fisher-Rao metric $g^*(h)$ at a point $h \in \RepSpace$ is bounded by the minimum of the representation dimension and the dimension of the probability simplex:
\begin{equation}
\mathrm{rank}(g^*(h)) \le \min(d_{\mathrm{model}}, |\VV| - 1).
\end{equation}
\end{proposition}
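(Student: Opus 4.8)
The plan is to exploit the factorization $g^*(h) = J(h)^{\top} g^{\text{FR}}(g_{\text{head}}(h)) J(h)$ recorded just before the statement, together with the score–outer–product form \cref{eq:gstar_score}. Either of two short arguments suffices, and I would present whichever reads most cleanly.

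The linear-algebraic route. Write $J(h)$ for the Jacobian of $g_{\text{head}}$ expressed in a genuine $(|\VV|-1)$-dimensional local chart $\xi$ on the open simplex $\Prob(\VV)$; this is an $(|\VV|-1)\times d_{\mathrm{model}}$ matrix. The Fisher–Rao matrix $g^{\text{FR}}(p_h)$ is symmetric positive-definite of size $(|\VV|-1)\times(|\VV|-1)$ on the open simplex, hence has full rank $|\VV|-1$. From $g^*(h) = J(h)^{\top} g^{\text{FR}}(p_h) J(h)$ and the submultiplicativity of rank under matrix products, $\mathrm{rank}(g^*(h)) \le \min\{\mathrm{rank}(J(h)),\ \mathrm{rank}(g^{\text{FR}}(p_h))\} \le \mathrm{rank}(J(h))$, while $\mathrm{rank}(J(h)) \le \min(|\VV|-1,\ d_{\mathrm{model}})$ since $J(h)$ has $|\VV|-1$ rows and $d_{\mathrm{model}}$ columns. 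This gives the claimed bound.

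The intrinsic route via \cref{eq:gstar_score}. There $g^*(h) = \Expect_{W\sim p_h}[\,s_h(W)\, s_h(W)^{\top}\,]$ with score vectors $s_h(w) \defeq \nabla_h \log p_h(w) \in \RepSpace \cong \RR^{d_{\mathrm{model}}}$, exhibiting $g^*(h)$ as a nonnegative combination of rank-one positive-semidefinite matrices supported on $\{s_h(w): w\in\VV\}$. Hence $\mathrm{rank}(g^*(h)) \le \dim \operatorname{span}\{s_h(w): w\in\VV\}$. That span lies in $\RR^{d_{\mathrm{model}}}$, which supplies the bound $d_{\mathrm{model}}$; and the $|\VV|$ vectors obey the single linear relation $\sum_{w\in\VV} p_h(w)\, s_h(w) = \sum_{w\in\VV}\nabla_h p_h(w) = \nabla_h\!\big(\sum_{w\in\VV} p_h(w)\big) = \nabla_h 1 = 0$, using that $p_h = \KernelLMHead(h,\cdot)$ is normalized for every $h$, so the span has dimension at most $|\VV|-1$. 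Taking the minimum of the two bounds proves the proposition.

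The argument is essentially routine; the only point that needs a little care is getting $|\VV|-1$ rather than a naive $|\VV|$ — in the first route this is the statement that the simplex chart is $(|\VV|-1)$-dimensional (equivalently, $g^{\text{FR}}$ lives on the $(|\VV|-1)$-dimensional tangent space $T_{p_h}\Prob(\VV)$), and in the second route it is the normalization null direction exhibited above. I would close with the remark that the bound is tight (e.g. for $g_{\text{head}}(h) = \softmax(Wh)$ with $W$ of full rank $\min(d_{\mathrm{model}},|\VV|-1)$, the rank is attained generically), and in particular that $g^*$ is necessarily degenerate whenever $d_{\mathrm{model}} < |\VV|-1$, which is the structural fact invoked in the spectral analysis of \Cref{sec:repr_learning_theory}.
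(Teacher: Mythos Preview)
Your first (linear-algebraic) route is essentially the paper's proof: both factor $g^*(h) = J(h)^\top g^{\text{FR}} J(h)$ with the Jacobian an $(|\VV|-1)\times d_{\mathrm{model}}$ matrix and bound the rank by the dimensions of $J(h)$. The paper invokes the slightly sharper identity $\mathrm{rank}(A^\top B A) = \mathrm{rank}(A)$ for positive-definite $B$, obtaining the equality $\mathrm{rank}(g^*(h)) = \mathrm{rank}(J(h))$ rather than your inequality via submultiplicativity, but since the proposition only asserts an upper bound, either suffices. Your second route via the score outer-product form \cref{eq:gstar_score} is a correct alternative not in the paper; it is coordinate-free and makes the origin of the $|\VV|-1$ bound---the normalization relation $\sum_{w} p_h(w)\,s_h(w)=0$---explicit rather than hiding it in the choice of simplex chart, which is a pleasant complement to the Jacobian argument and connects more directly to the tightness remark you close with.
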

\begin{proof}
The pullback metric $g^*(h)$ is defined as $g^*(h) = J(h)^{\top} g^{\text{FR}}(g_{\text{head}}(h)) J(h)$, where $J(h)$ is the Jacobian of the map $g_{\text{head}}: \RepSpace \to \Prob(\VV)$ (represented in appropriate local coordinates). The dimension of $\RepSpace$ is $d_{\mathrm{model}}$, and the dimension of $\Prob(\VV)$ is $d_{\text{prob}} = |\VV| - 1$. The Jacobian $J(h)$ is a $d_{\text{prob}} \times d_{\mathrm{model}}$ matrix. The Fisher-Rao metric $g^{\text{FR}}$ at $g_{\text{head}}(h)$ is a $d_{\text{prob}} \times d_{\text{prob}}$ positive definite matrix (and thus has rank $d_{\text{prob}}$).
Using the property that $\mathrm{rank}(A^{\top}BA) = \mathrm{rank}(A)$ if $B$ is positive definite, we have $\mathrm{rank}(g^*(h)) = \mathrm{rank}(J(h))$. The rank of a matrix is bounded by its dimensions, so
\[ \mathrm{rank}(g^*(h)) \le \min(d_{\mathrm{model}}, d_{\text{prob}}) = \min(d_{\mathrm{model}}, |\VV| - 1). \]
On the interior of the simplex (Assumption~\ref{assump:standing}), $g^{\text{FR}}$ is positive definite; near the boundary the argument applies on a high-probability subset with $p_h(w)\ge \delta>0$.
\end{proof}

\subsection{Interpretation and Implications}

This geometric perspective provides several insights:
\begin{itemize}
\item The quadratic form $g^*(h)(v, v)$ quantifies the local distinguishability (via KL divergence, \cref{eq:gstar_kl_relation}) between the output distributions $p_h$ and $p_{h+\epsilon v}$. It measures how sensitive the model's prediction is to perturbations of the hidden state $h$ in a direction $v$. Directions $v$ with large $g^*(h)(v, v)$ correspond to changes in $h$ that significantly alter the output distribution.
\item In modern LMs, the representation dimension is usually much smaller than the vocabulary size ($d_{\mathrm{model}}\ll |\VV|$). By Prop.~\ref{prop:rank_pullback_metric}, $\mathrm{rank}(g^*(h))\le d_{\mathrm{model}}$. This upper bound does \emph{not} imply degeneracy on $\RepSpace$: if the head Jacobian has full column rank modulo the all-ones logit direction, $g^*(h)$ can be full rank on $\mathbb R^{d_{\mathrm{model}}}$. The important phenomenon is instead \textit{anisotropy}: some directions have much larger predictive sensitivity than others. \label{rem:typical_rank_degeneracy}
\item The eigenvalues and eigenvectors of the matrix for $g^*(h)$ reveal the local principal directions of predictive sensitivity in $\RepSpace$. Directions with large eigenvalues are those where small changes in $h$ induce large changes, geometrically measured by $g^{\text{FR}}$, in the predicted distribution $p_h$. These directions are important for the head's current prediction map; whether they coincide with globally important semantic directions depends on the encoder distribution and the region of $\RepSpace$ being analyzed.
\end{itemize}

Thus separation should be understood through the head map. Contexts with different predictive futures need not be far in every Euclidean direction of $\RepSpace$; they must differ along directions visible to $g_{\text{head}}$. The geometry induced by $g^*$ characterizes this local separation capability. Training shapes the encoder $(\KernelBB\comp\KernelEmb)$ and the LM head $\KernelLMHead$ so that contexts with different true conditionals are mapped to hidden states whose images under $g_{\text{head}}$ are appropriately separated in $\Prob(\VV)$.

\section{NLL as Implicit Spectral Contrastive Learning}
\label{sec:repr_learning_theory}

A central thesis of this work is that the simple objective of minimizing the negative log-likelihood (NLL) of the next token (\cref{eq:kl_objective_kernels_revisited}) implicitly functions as a powerful form of contrastive learning. While lacking the explicit positive/negative pairs of standard contrastive methods, we prove that NLL optimization inherently structures the learned representation space $\RepSpace$ according to predictive similarity. It achieves this by implicitly solving a spectral objective that aligns the geometry of representations with the underlying predictive structure of the data $P_{\text{data}}(\cdot|x)$, a principle we formalize by connecting NLL to the eigenspectrum of a predictive similarity operator \citep{haochen2021provable, Tan2023ContrastiveSpectral}.

Let $f_{\text{enc}}: \ContextSeqSpace \to \RepSpace$ denote the deterministic encoder mapping a context sequence $x = \mathbf{w}_{<t}$ to its hidden representation $h_x = f_{\text{enc}}(x)$, implemented by the composition $\KernelBB \comp \KernelEmb$. Let $g_{\text{head}}: \RepSpace \to \Prob(\VV)$ be the deterministic mapping from the hidden state to the next-token distribution, corresponding to the LM head kernel $\KernelLMHead$, such that $p_\theta(\cdot | x) = g_{\text{head}}(h_x)$. The training objective is to minimize the expected KL divergence over the context distribution $\mu_{ctx} = p_{W_{<t}}$:

\begin{equation}
\mathcal{L}(\theta) = \Expect_{x \sim \mu_{ctx}} [ \KLDiv( P_{\text{data}}(\cdot | x) \,\|\, g_{\text{head}}(f_{\text{enc}}(x)) ) ]
\label{eq:nll_objective_func_revisited}
\end{equation}
where $P_{\text{data}}(\cdot | x)$ represents the true conditional distribution of the next token given context $x$, assumed to be derived from the data-generating process.

Successful optimization of $\mathcal{L}(\theta)$ drives the model's output distribution $p_\theta(\cdot|x) = g_{\text{head}}(h_x)$ towards the target distribution $P_{\text{data}}(\cdot|x)$ in the sense of minimizing average KL divergence. As we argue below, this fundamental requirement indirectly imposes geometric constraints on the distribution of representations $h_x = f_{\text{enc}}(x)$ in $\RepSpace$.

\subsection{Constraint on Output Distribution Approximation}

Minimizing the NLL loss (\cref{eq:nll_objective_func_revisited}) directly forces the model's predicted distribution $p_\theta(\cdot|x)$ to closely approximate the target distribution $P_{\text{data}}(\cdot|x)$. This closeness can be measured not only by KL divergence but also by other standard metrics on probability distributions, due to well-known inequalities relating them.

\begin{theorem}[Output Distribution Approximation Constraint]
\label{thm:implicit_contrastive_constraint}
Assume the model parameters $\theta$ yield a small average KL divergence $\mathcal{L}_{\mathrm{KL}}(\theta)\defeq \Expect_{x \sim \mu_{ctx}}[\KLDiv(P_{\text{data}}(\cdot|x) \| p_\theta(\cdot|x))]$, where $p_\theta(\cdot|x) = g_{\text{head}}(f_{\text{enc}}(x))$. With our global convention for Hellinger, $d_H^2(p,q)\le \frac{1}{2}\KLDiv(p\|q)$ on finite alphabets. Therefore,
\begin{equation}
\Expect_{x \sim \mu_{ctx}}[d_H(P_{\text{data}}(\cdot|x), p_\theta(\cdot|x))^2] \le \tfrac12\,\mathcal{L}_{\mathrm{KL}}(\theta).
\label{eq:output_dist_approx}
\end{equation}
Consequently, if the model fits the data well ($\mathcal{L}_{\mathrm{KL}}(\theta)$ is small), then for any pair $(x,x')$,
\begin{align}
\big|d_H(p_\theta(\cdot|x), p_\theta(\cdot|x')) - d_H(P_{\text{data}}(\cdot|x), P_{\text{data}}(\cdot|x'))\big|
\le \epsilon_x+\epsilon_{x'},\qquad \epsilon_x:=d_H(P_{\text{data}}(\cdot|x), p_\theta(\cdot|x)),
\label{eq:output_approx_implication}
\end{align}
and $\mathbb P(\epsilon_X\ge \delta)\le \tfrac{1}{2}\,\mathcal L_{\mathrm{KL}}(\theta)/\delta^{2}$ by Markov. Hence, for independent $X,X'\sim\mu_{\mathrm{ctx}}$,
\[
\mathbb P\!\left(
\big|d_H(p_\theta^X,p_\theta^{X'})-d_H(p_{\mathrm{data}}^X,p_{\mathrm{data}}^{X'})\big|>2\delta
\right)
\le
\frac{\mathcal L_{\mathrm{KL}}(\theta)}{\delta^2},
\]
where $p_\theta^X=p_\theta(\cdot|X)$ and $p_{\mathrm{data}}^X=P_{\mathrm{data}}(\cdot|X)$. Thus pairwise predictive distances are approximated on typical random pairs whenever the average KL is small.
\end{theorem}
\begin{proof}[Proof Sketch]
Apply $d_H^2\le \frac{1}{2}\,\mathrm{KL}$ pointwise with $p = P_{\text{data}}(\cdot|x)$ and $q = p_\theta(\cdot|x)$ and take expectations to obtain \Cref{eq:output_dist_approx}. The pairwise inequality follows from the triangle inequality for $d_H$. Markov's inequality gives the one-point tail bound, and a union bound over independent $X,X'$ gives the displayed random-pair bound.
\end{proof}
Different normalizations of $d_{\mathrm{TV}}$ and $d_H$ only change constants; we use the convention fixed in the preliminaries throughout.

This theorem formalizes the intuition that minimizing the NLL objective forces the model's predictions to mirror the structure of the true predictive distributions, specifically in terms of their pairwise distances.

\subsection{Consequences for Representation Geometry}

\Cref{thm:implicit_contrastive_constraint} establishes that predictively dissimilar contexts $x, x'$ must lead to distinct model output distributions $p_\theta(\cdot|x), p_\theta(\cdot|x')$. Since $p_\theta(\cdot|x) = g_{\text{head}}(h_x)$ and $p_\theta(\cdot|x') = g_{\text{head}}(h_{x'})$, this requirement imposes constraints on the corresponding representations $h_x = f_{\text{enc}}(x)$ and $h_{x'} = f_{\text{enc}}(x')$. Specifically, $h_x$ and $h_{x'}$ must differ in ways that are discernible by the head mapping $g_{\text{head}}$.
The information geometry of the head mapping, captured by the pullback metric $g^*(h)$ (\Cref{sec:infogeo}), determines which differences in representation space are discernible.

\begin{corollary}[Implicit Representation Separation]
\label{coroll:representation_separation}
Assume the model fits the data well ($\mathcal{L}(\theta)$ is small). Let $p_h\defeq g_{\mathrm{head}}(h)$ and $h_x=f_{\mathrm{enc}}(x)$. For two contexts $x,x'$, set $v\defeq h_x-h_{x'}$ and consider the straight path $h_t=(1-t)h_{x'}+t h_x$. The image path $\gamma(t)=g_{\mathrm{head}}(h_t)$ in the output simplex has length
\[
L(\gamma)=\int_0^1 \sqrt{g^*(h_t)(v,v)}\,\dd t,
\]
so the Fisher--Rao geodesic distance satisfies
\begin{equation}
d_{\mathrm{FR}}\!\big(p_{h_x},p_{h_{x'}}\big)
\ \le\ \int_0^1 \sqrt{g^*(h_t)(v,v)}\,\dd t
\ \le\ \sqrt{\int_0^1 g^*(h_t)(v,v)\,\dd t}.
\label{eq:repsep_path_bound}
\end{equation}
Thus $d_{\mathrm{FR}}(p_{h_x},p_{h_{x'}})^2 \le \int_0^1 g^*(h_t)(v,v)\,\dd t$. Combined with \Cref{thm:implicit_contrastive_constraint} and the monotone relation between Hellinger and Fisher--Rao distance on the simplex, predictively dissimilar contexts force a large right-hand side in Eq.~\eqref{eq:repsep_path_bound}. Equivalently, $v$ must have substantial components along directions where $g^*$ is large, unless the head is locally insensitive along the path.
\end{corollary}
\begin{proof}[Proof Sketch]
From \Cref{thm:implicit_contrastive_constraint}, if $d_{\text{out}}(P_{\text{data}}(\cdot|x), P_{\text{data}}(\cdot|x'))$ is large, then $d_{\text{out}}(g_{\text{head}}(h_x), g_{\text{head}}(h_{x'}))$ must also be large. The squared distance between two points in a Riemannian manifold is related to the integrated metric along a geodesic. For small distances in output space, we have $d_{\text{out}}(p, q)^2 \approx \KLDiv(p\|q)$, which from Eq.~\eqref{eq:gstar_kl_relation} is related to the pullback metric $g^*$. A large distance between $g_{\text{head}}(h_x)$ and $g_{\text{head}}(h_{x'})$ implies a large integrated path length according to the pullback geometry, forcing $h_x$ and $h_{x'}$ to differ along directions where $g^*$ is large.
\end{proof}

This corollary establishes that NLL minimization implicitly acts like a contrastive learning objective: it pushes representations $h_x, h_{x'}$ apart if their corresponding contexts are predictively dissimilar. This differential pressure based on predictive similarity forms the basis for our connection to spectral methods.

\subsection{Predictive Similarity Kernels}

The preceding analysis suggests that NLL shapes the representation geometry based on the \textit{dissimilarity} between the true next-token distributions $P_{\text{data}}(\cdot|x)$. To connect this to spectral methods, which operate on similarity structures, we formalize the complementary notion of \textit{predictive similarity}.

\begin{definition}[Predictive Similarity Kernel]
\label{def:predictive_similarity_kernel}
Let $p_x \defeq P_{\text{data}}(\cdot|x)$ denote the true conditional distribution for context $x$. A predictive similarity kernel is a bounded symmetric real-valued function $K: \ContextSeqSpace \times \ContextSeqSpace \to \mathbb{R}$ quantifying the similarity between $p_x$ and $p_{x'}$. For graph-Laplacian and Dirichlet-energy statements below, we additionally require $K(x,x')\ge0$ so that it can be interpreted as an edge weight. For operator and CCA statements, the important condition is positive semidefiniteness of the quadratic form; after centering or whitening, such kernels may take negative pairwise values. Examples include:
\begin{itemize}
\item \textbf{Bhattacharyya Coefficient Kernel}: $K_{\text{BC}}(x, x') \defeq \text{BC}(p_x, p_{x'}) = \sum_{w \in \VV} \sqrt{p_x(w) p_{x'}(w)}$. This measures the cosine similarity between the square-root vectors $(\sqrt{p_x(w)})_w$. Under our global convention $d_H^2(p,q)=1-\sum_w\sqrt{p(w)q(w)}$, we have $d_H^2(p_x,p_{x'}) = 1 - K_{\text{BC}}(x,x')$, and $K_{\text{BC}}$ is positive semidefinite. High $K_{\text{BC}}$ corresponds to low $d_H$.
\item \textbf{Hellinger-based Kernel (Gaussian Kernel on $\sqrt{p}$)}: $K_{\text{H}}(x, x') \defeq \exp(-\beta d_H^2(p_x, p_{x'}))$ for some scale $\beta > 0$. Since $d_H^2(p,q)=1-\langle \sqrt p,\sqrt q\rangle$, this kernel equals $e^{-\beta}\exp(\beta\langle \sqrt{p_x},\sqrt{p_{x'}}\rangle)$ and is positive semidefinite.
\item \textbf{Expected Likelihood Kernel (Linear Kernel)}: $K_{\text{Lin}}(x, x') \defeq \langle p_x, p_{x'} \rangle = \sum_{w \in \VV} p_x(w) p_{x'}(w)$. This is the standard linear kernel (inner product) between probability vectors $p_x, p_{x'}$ and is positive semidefinite. High values indicate significant overlap between the distributions. It can be interpreted as the expected likelihood $p_{x'}(W)$ under $W \sim p_x$.
\item \textbf{Divergence-based Kernels}: $K(x,x')=\exp(-\beta S(p_x,p_{x'}))$ for a symmetric divergence $S$, such as Jensen--Shannon divergence. Positive semidefiniteness is not automatic for arbitrary choices of $S$ and must be verified separately.
\end{itemize}
In general, larger values of $K(x, x')$ indicate higher predictive similarity. To act on representations, we disintegrate $K$ through the encoder $f_{\mathrm{enc}}$ by defining the induced kernel on $\RepSpace$:
\begin{equation}
\widetilde K(h,h') \triangleq \mathbb E\!\big[K(X,X') \;\big|\; f_{\mathrm{enc}}(X)=h, f_{\mathrm{enc}}(X')=h'\big],
\label{eq:Ktilde_def}
\end{equation}
whenever the conditional expectation exists. Since all spaces here are standard Borel, regular conditional probabilities exist; hence, the disintegration above is well-defined. This produces a bounded symmetric measurable kernel on $(\RepSpace,\mu)$, where $\mu=(f_{\mathrm{enc}})_\#\mu_{ctx}$, suitable for operator-theoretic analysis.
For the operator results below we assume $K$ is \emph{positive semidefinite} (PSD), meaning that its integral quadratic form is nonnegative for all bounded measurable test functions. The examples $K_{\text{BC}}$, $K_{\text{H}}$, and $K_{\text{Lin}}$ are PSD. For graph-energy results, we also assume pointwise nonnegativity.
\end{definition}

\begin{lemma}[PSD preserved under disintegration]\label{lem:psd_disintegration}
If $K$ is PSD on $\ContextSeqSpace$ in the quadratic-form sense, then $\widetilde K$ defined in \eqref{eq:Ktilde_def} is PSD on $(\RepSpace,\mu)$. Consequently, for all $\psi\in L^2(\RepSpace,\mu)$,
\[
\iint \psi(h)\,\widetilde K(h,h')\,\psi(h')\,\mu(\dd h)\mu(\dd h') \;\ge\; 0.
\]
\end{lemma}
\begin{proof}[Proof sketch]
Let $(X,X')\sim\mu_{ctx}\otimes\mu_{ctx}$ and $H=f_{\mathrm{enc}}(X)$, $H'=f_{\mathrm{enc}}(X')$. For any bounded measurable $\psi$, by the tower property,
\[
\mathbb E[\psi(H)\psi(H')K(X,X')] \;=\; \mathbb E\!\left[\,\mathbb E[K(X,X')\mid H,H']\,\psi(H)\psi(H')\right]
= \iint \psi(h)\widetilde K(h,h')\psi(h')\,\dd\mu(h)\dd\mu(h').
\]
Since $K$ is PSD, the left-hand side is $\ge 0$, hence so is the right-hand side.
\end{proof}

\subsection{Connection to Graph Laplacian and Dirichlet Energy Minimization}

Consider an undirected graph where contexts $x \in \ContextSeqSpace$ are nodes distributed according to $\mu_{ctx}$, and edge weights are given by a bounded nonnegative symmetric predictive similarity kernel $K(x, x')$.
The quadratic form of the associated graph Laplacian corresponds to the Dirichlet energy, which measures how ``smooth'' a function $\phi$ (e.g., a 1D projection of the representations) is over the graph.
\begin{equation}
\mathcal{E}_K(\phi) \defeq \frac{1}{2} \iint K(x, x') (\phi(x) - \phi(x'))^2 \, \mu_{ctx}(\dd x) \mu_{ctx}(\dd x') = \langle \phi, \Delta_K \phi \rangle_{L^2(\mu_{ctx})}.
\label{eq:dirichlet_energy}
\end{equation}
Spectral clustering aims to find embeddings (represented by functions $\phi$) that minimize this energy subject to constraints, effectively mapping similar contexts close together. The NLL objective, through \Cref{coroll:representation_separation}, exerts a related pressure.

Recent work \citep{park2024iclr} connects in-context learning to Dirichlet energy minimization on a task-similarity graph. Our work shows this is a foundational principle of NLL pre-training itself, where similarity is defined by the intrinsic next-token distributions $P_{\text{data}}(\cdot|x)$.

\begin{proposition}[Dirichlet energy bound under local bi-Lipschitzness]
\label{prop:nll_dirichlet}
Fix a bounded nonnegative symmetric predictive-similarity kernel $K$ and define, for $v\in\RepSpace$, the projection $\phi_v(x)=\langle h_x, v\rangle$ with $h_x=f_{\mathrm{enc}}(x)$. Assume moreover that $g_{\mathrm{head}}$ is locally bi-Lipschitz on a high-probability compact set: there exist $0<m\le L<\infty$ such that for all $h,h'$ in this set,
\[
m\,\|h-h'\|_2 \ \le\ d_{\mathrm{FR}}(p_h,p_{h'})\ \le\ L\,\|h-h'\|_2,
\]
and use $d_{\mathrm{FR}}(p,q)\le \pi\sqrt2\,d_H(p,q)$, so that
\[
\|h-h'\|_2 \ \le\ \frac{\pi\sqrt2}{m}\,d_H(p_h,p_{h'}).
\]
Suppose
\[
\varepsilon \;\defeq\; \Expect_{x\sim\mu_{ctx}}\!\Big[d_H\!\big(P_{\mathrm{data}}(\cdot\!\mid\!x),\,p_\theta(\cdot\!\mid\!x)\big)^2\Big]
\]
is small and let $\|K\|_\infty<\infty$. Then
\begin{align*}
\mathcal{E}_K(\phi_v)
\;&\le\;
\frac{\pi^2\|v\|^2}{m^{2}}\, \iint K(x,x')\, d_H\!\big(p_\theta(\cdot\!\mid\!x),p_\theta(\cdot\!\mid\!x')\big)^2 \,\mu_{ctx}(\dd x)\mu_{ctx}(\dd x') \\
\;&\le\;
C_{\mathrm{geom}}\,
\iint K(x,x')\, d_H\!\big(P_{\mathrm{data}}(\cdot\!\mid\!x),P_{\mathrm{data}}(\cdot\!\mid\!x')\big)^2 \,\mu_{ctx}(\dd x)\mu_{ctx}(\dd x')
\;+\; \frac{12\pi^2\|v\|^2}{m^2}\|K\|_\infty\,\varepsilon,
\end{align*}
where $C_{\mathrm{geom}}=3\pi^2\|v\|^2/m^2$.
\end{proposition}
\begin{proof}
The local lower Lipschitz condition and the FR--Hellinger relation imply
\[
\|h_x-h_{x'}\|_2^2\le \frac{2\pi^2}{m^2}\,
d_H\!\big(p_\theta(\cdot|x),p_\theta(\cdot|x')\big)^2 .
\]
Therefore
\[
\mathcal E_K(\phi_v)
\le \frac12\|v\|^2\iint K(x,x')\|h_x-h_{x'}\|^2\,\dd\mu(x)\dd\mu(x'),
\]
which gives the first bound. Let
$e_x=d_H(P_{\mathrm{data}}(\cdot|x),p_\theta(\cdot|x))$. The triangle inequality gives
\[
d_H(p_\theta(\cdot|x),p_\theta(\cdot|x'))
\le d_H(P_{\mathrm{data}}(\cdot|x),P_{\mathrm{data}}(\cdot|x'))+e_x+e_{x'}.
\]
Using $(a+b+c)^2\le 3a^2+6b^2+6c^2$ and boundedness of $K$ gives the stated second inequality.
\end{proof}

This proposition is a conditional control statement: if NLL makes the predicted conditionals close to the data conditionals and if the head is locally injective on the relevant region, then projections of the representation have small Dirichlet energy whenever the nonnegative predictive-similarity kernel assigns large weight mainly to close predictive conditionals.

\subsection{NLL as Spectral Objective}

We now state a conditional spectral result. The statement concerns a calibrated quadratic surrogate to the NLL objective and the variance-normalized alignment problem induced by that surrogate; it should not be read as an exact equivalence for the original nonconvex likelihood. This viewpoint is closest to spectral analyses of contrastive learning such as \citep{Tan2023ContrastiveSpectral}.

\begin{definition}[Predictive Similarity Operator]
\label{def:predictive_similarity_operator}
Let $\mu=(f_{\mathrm{enc}})_\#\mu_{ctx}$ on $\RepSpace$ and $\widetilde K$ be as in \eqref{eq:Ktilde_def}. Define the integral operator $M_{\widetilde K}:L^2(\RepSpace,\mu)\to L^2(\RepSpace,\mu)$ by
\begin{equation}
(M_{\widetilde K}\psi)(h) \defeq \int_{\RepSpace} \widetilde K(h,h')\,\psi(h')\,\mu(\dd h').
\label{eq:operator_M_tilde}
\end{equation}
Under boundedness and symmetry, $M_{\widetilde K}$ is compact and self-adjoint with real spectrum. If, in addition, $K$ is PSD (hence $\widetilde K$ is PSD by Lemma~\ref{lem:psd_disintegration}), then $M_{\widetilde K}$ is positive semidefinite and its spectrum is nonnegative.
\end{definition}

\begin{remark}[Estimating $\widetilde K$ in practice]\label{rem:Ktilde_estimation}
While $K(x,x')$ is defined using $P_{\mathrm{data}}(\cdot\mid x)$, in practice one may approximate it via: (i) a stronger teacher model to produce $\hat p(\cdot\mid x)$; (ii) the learner’s own predictions $p_\theta(\cdot\mid x)$ (late‑training snapshot); or (iii) surrogate “predictive prototypes” $\bar g_x=\mathbb E[g(W)\mid x]$ and the linear kernel $K(x,x')=\langle \bar g_x,\bar g_{x'}\rangle$. Each choice induces a corresponding $\widetilde K$ via disintegration \eqref{eq:Ktilde_def}, and leads to the same operator‑eigenfunction pipeline under whitening.
For the CCA/spectral form in \Cref{thm:nll_spectral_cca}, it is natural to use \emph{whitened} prototypes
\[
\tilde g_x \defeq C_{\bar g\bar g}^{-1/2}\,\bar g_x,
\]
so that $K(x,x')=\langle \tilde g_x,\tilde g_{x'}\rangle$ matches the $C_{\bar g\bar g}^{-1}$ factor. This whitened linear kernel is PSD as a kernel, but after centering it need not be pointwise nonnegative; it is therefore appropriate for operator/CCA analysis, not necessarily as a graph edge-weight kernel without further modification.
\end{remark}

Since all spaces are standard Borel and $f_{\rm enc}$ is Borel-measurable, regular conditional probabilities exist; hence the conditional expectation in Eq.~\eqref{eq:Ktilde_def} is well defined a.s. and $\widetilde K$ is measurable and bounded.
If $\widetilde K$ is symmetric and bounded, the induced operator is Hilbert–Schmidt, compact, and self-adjoint; its eigenfunctions capture dominant patterns of predictive similarity. Our key result is that, under a linear-softmax LM head, NLL training aligns the representation geometry with these eigenspaces in a generalized spectral/Canonical Correlation Analysis (CCA) sense.

\begin{theorem}[Calibrated Fenchel-Young surrogate]\label{thm:fenchel_young_tau}
Assume a linear-softmax LM head \(p_\theta(w\mid x)\propto \exp\{\langle g(w),h_x\rangle + b_w\}\) with \(h_x=f_{\mathrm{enc}}(x)\) and $\max_w\|g(w)\|\le R<\infty$. For any $\tau>0$ and all \(h\in\RepSpace\),
\[
\log \sum_{w}\exp\{\langle g(w),h\rangle+b_w\}\ \le\ C_\tau+\tfrac{\tau}{2}\|h\|^2,
\qquad
C_\tau=\log\sum_{w}\exp\!\big\{b_w+\tfrac{1}{2\tau}\|g(w)\|^2\big\}.
\]
Equality in the per‑term Young bound holds iff \(g(w)=\tau h\); thus, equality in the sum requires this for every contributing \(w\), which is possible only in degenerate cases.
Consequently,
\[
\mathbb E\big[-\log p_\theta(W\mid X)\big]
\;\le\; \left(C_\tau - \mathbb E[b_W]\right) \;-\; \mathbb E\langle \bar g_X, h_X\rangle \;+\; \tfrac{\tau}{2}\,\mathbb E\|h_X\|^2,
\quad\bar g_X:=\mathbb E[g(W)\mid X].
\]
The RHS equals $\tfrac{\tau}{2}\,\mathbb E\big\|h_X-\tfrac{1}{\tau}\bar g_X\big\|^2-\tfrac{1}{2\tau}\,\mathbb E\|\bar g_X\|^2+\left(C_\tau-\mathbb E[b_W]\right)$, i.e., a \emph{calibrated regression} objective for $h_X$ onto $\bar g_X/\tau$.
\paragraph{Practical calibration of $\tau$.}
Minimizing the bound w.r.t.\ $\tau$ balances $C_\tau$ (decreasing in $\tau$) against $\tfrac{\tau}{2}\mathbb E\|h_X\|^2$ (increasing). A practical scheme is to (i) estimate $R\!\approx\!\max_w\|g(w)\|$, (ii) track $\widehat m=\mathbb E\|h_X\|^2$ on a held‑out set, and (iii) choose $\tau$ by a 1D line search minimizing $C_\tau+\tfrac{\tau}{2}\widehat m$ (cost dominated by evaluating $C_\tau$ once per candidate).
For fixed representations and prototypes, $\tau$ only rescales the prototype side of the regression surrogate and therefore cancels from the CCA directions in \Cref{thm:nll_spectral_cca}. During actual training, changing $\tau$ changes the surrogate objective and should be viewed as a calibration choice rather than an invariance of the original NLL.
\end{theorem}
\begin{proof}
Young's inequality with parameter $\tau$ gives
\[
\langle g(w),h\rangle \le \frac{1}{2\tau}\|g(w)\|^2+\frac{\tau}{2}\|h\|^2 .
\]
Substituting this bound into each exponential term in the log-partition function yields the stated bound with constant $C_\tau$. Taking expectation over $W\mid X$ gives the linear term $-\mathbb E\langle \bar g_X,h_X\rangle$ and the bias term $-\mathbb E[b_W]$. Completing the square gives the calibrated regression form. Equality in the Young bound occurs exactly when $g(w)=\tau h$ for the corresponding term, so equality for the whole log-sum requires this simultaneously for all tokens with nonzero contribution, which is degenerate except in special cases.
\end{proof}

\begin{proposition}[Closed‑form in a linearized encoder]
Let $h_X=A\,\phi(X)$ with fixed feature map $\phi$ and $C_{\phi\phi}=\mathbb E[\phi\phi^\top]\succ0$. Minimizing the surrogate in \Cref{thm:fenchel_young_tau} yields the normal equations
$A^\star C_{\phi\phi}=\tfrac{1}{\tau}C_{\phi\bar g}^\top$, i.e., $A^\star=\tfrac{1}{\tau}C_{\phi\bar g}^\top C_{\phi\phi}^{-1}$ (ridge modifications if desired).
\end{proposition}

\begin{theorem}[CCA form of the NLL surrogate]
\label{thm:nll_spectral_cca}
Let $h_X\in\mathbb R^d$ and $\bar g_X\in\mathbb R^{d_g}$ be (centered) square-integrable random vectors with
$C_{hh}\succ0$, $C_{\bar g\bar g}\succ0$, and cross-covariance $C_{h\bar g}$. Consider the canonical-correlation objective
\[
\max_{u,v\ne0}\ \frac{u^\top C_{h\bar g} v}{\sqrt{u^\top C_{hh}u}\,\sqrt{v^\top C_{\bar g\bar g}v}}.
\]
Optimizing out $v$ yields the equivalent squared Rayleigh quotient
\[
\max_{u\ne0}\ \frac{u^\top\,C_{h\bar g}\,C_{\bar g\bar g}^{-1}\,C_{h\bar g}^\top u}{u^\top C_{hh}\,u}.
\]
The maximizers are generalized eigenvectors of
\[
C_{h\bar g}C_{\bar g\bar g}^{-1}C_{h\bar g}^\top u=\lambda C_{hh}u.
\]
Equivalently, they are right eigenvectors of
\[
M=C_{hh}^{-1}C_{h\bar g}C_{\bar g\bar g}^{-1}C_{h\bar g}^\top,
\]
which is generally not symmetric in the Euclidean inner product but is self-adjoint with respect to the $C_{hh}$ inner product. Multiple directions with constraints $u_i^\top C_{hh}u_j=\delta_{ij}$ are given by the top generalized eigenvectors. In whitened coordinates ($C_{hh}=I$), this reduces to an ordinary symmetric eigenproblem.
\end{theorem}

\begin{corollary}[Operator view (functional version)]
When optimizing over linear functionals of $h\in L^2(\RepSpace,\mu)$ with whitening, the variational problem in Theorem~\ref{thm:nll_spectral_cca} corresponds to an eigenfunction problem for the Hilbert--Schmidt operator with kernel
\[
\widetilde K(h,h')
\;=\;
\mathbb E\!\big[\langle \tilde g_X,\tilde g_{X'}\rangle\ \big|\ f_{\mathrm{enc}}(X)=h,\ f_{\mathrm{enc}}(X')=h'\big],
\qquad
\tilde g_X\defeq C_{\bar g\bar g}^{-1/2}\bar g_X.
\]
\end{corollary}

\begin{corollary}[Finite-sample analogue]
Let $\widehat C_{hh},\widehat C_{\bar g\bar g},\widehat C_{h\bar g}$ be centered empirical second moments computed from $n$ i.i.d. contexts and their targets, and let $\lambda>0$ be a ridge parameter. Under standard boundedness or sub-Gaussian moment assumptions and a positive eigengap $\gamma$ for the corresponding ridge-regularized population problem, maximizing
\[
u^\top \widehat C_{h\bar g}(\widehat C_{\bar g\bar g}+\lambda I)^{-1}\widehat C_{h\bar g}^\top u
\quad\text{subject to}\quad
u^\top(\widehat C_{hh}+\lambda I)u=1
\]
yields a subspace whose principal-angle error is $O_p(n^{-1/2}/\gamma)$ by covariance concentration and Davis--Kahan perturbation theory, with constants depending on the moment bounds and ridge level.
\end{corollary}

\begin{corollary}[Whitened special case]
\label{cor:whitened_case}
If $C_{hh}=I$ (representation whitening) and we restrict to variance‑one directions $u$, the problem reduces to an ordinary eigenproblem for $C_{h\bar g}\,C_{\bar g\bar g}^{-1}\,C_{h\bar g}^\top$. Equivalently, defining whitened prototypes $\tilde g_X=C_{\bar g\bar g}^{-1/2}\bar g_X$, we have
\[
C_{h\bar g}\,C_{\bar g\bar g}^{-1}\,C_{h\bar g}^\top
\;=\;
C_{h\tilde g}\,C_{h\tilde g}^\top,
\]
so the leading eigendirections depend only on the cross-covariance between $h_X$ and $\tilde g_X$.
\end{corollary}

\begin{proof}[Proof Sketch]
The surrogate in \Cref{thm:fenchel_young_tau} yields a regression view with target $\bar g_X/\tau$. The regression objective alone does not identify a unique spectral basis; the spectral form appears after imposing whitening or variance/orthogonality constraints on linear functionals. Under these constraints, maximizing alignment becomes the CCA Rayleigh quotient above. The operator form follows from the Hilbert--Schmidt correspondence under whitening. The scalar calibration parameter $\tau$ rescales the prototype side and cancels in the CCA eigendirections.
\end{proof}

In summary, for linear-softmax LM heads, a calibrated quadratic surrogate to NLL gives a regression-to-prototypes view. Once one imposes whitening or variance-normalized linear-function constraints, the induced alignment problem becomes a generalized spectral/CCA problem. This provides a principled bridge from next-token likelihood to spectral organization of $\RepSpace$ under explicit assumptions.

\section{Related Work}
\label{sec:related_work}

The theoretical understanding of representation learning in deep neural networks has been advanced along several parallel, yet largely disconnected, fronts. A significant challenge lies in the absence of a unified mathematical language capable of connecting a model's compositional architecture and its training dynamics to the emergent geometric structure of its learned representations. This work is situated at the confluence of these disparate research programs, aiming to synthesize the algebraic, compositional perspective of categorical probability with the metric, differential-geometric view of information geometry. We structure our review of the related literature into two parts. First, we introduce the foundational languages of probability that our framework unifies: the synthetic view of probability rooted in category theory and the metric view rooted in information geometry. Second, we survey the tools and concepts used to analyze the geometry of learned representations, focusing on the training objectives that guide learning, the spectral methods used to measure the resulting geometry, and the optimization mechanisms that shape it.

\subsection{The Languages of Probability}

\paragraph{Probability as a Category.}
A burgeoning field of research seeks to reformulate probability theory on a more abstract, algebraic foundation using the language of category theory \citep{BaezFongPollard2016, FongSpivak2019}. This synthetic approach, in contrast to the classical analytic approach built upon measure theory, aims to derive probabilistic concepts from a small set of powerful axioms \citep{Fritz2020MC, Perrone2023Geo}. The central object of study in this domain is the Markov category~\citep{ChoJacobs2019, Fritz2020MC}. Formally, a Markov category is a symmetric monoidal category where each object is equipped with a commutative comonoid structure, consisting of morphisms that represent the abstract operations of copying and discarding information \citep{ChoJacobs2019, Fritz2020MC, Perrone2023Geo}. The morphisms in such a category are interpreted as stochastic maps, or Markov kernels, which are probabilistic mappings between objects \citep{BaezFongPollard2016, pardo2025neural}.

The pioneering work of \citet{Fritz2020MC} has established Markov categories as a robust framework for synthetic probability and statistics. A key advantage of this formalism is its generality; it provides a uniform treatment of vastly different probabilistic settings. For instance, the category \texttt{FinStoch}, with finite sets as objects and stochastic matrices as morphisms, and the category \texttt{BorelStoch}, with standard Borel spaces as objects and their corresponding Markov kernels as morphisms, are both canonical examples of Markov categories \citep{Fritz2020MC}. This high level of abstraction allows for the proof of fundamental statistical theorems, such as the Fisher-Neyman factorization theorem and Kolmogorov's zero-one law, in a purely diagrammatic and synthetic manner, avoiding the low-level complexities of measure theory \citep{Fritz2020MC, FritzRischel2020}. As \citet{Fritz2020MC} argues, relying on measure theory is akin to programming in machine code, whereas the categorical approach provides a higher-level language that facilitates reasoning about complex, compositional systems.

This line of inquiry is not merely a formal exercise; it is directly motivated by the challenges of understanding modern machine learning systems~\citep{Yuan2022PowerFoundation}. The compositional structure of deep neural networks, finds a natural description in the language of category theory \citep{FongSpivak2019, Yuan2022PowerFoundation, pardo2025neural}.

\paragraph{Probability as a Manifold.}
In parallel to the algebraic developments in category theory, the field of information geometry (IG) has provided a powerful differential geometric lens for studying machine learning \citep{AmariNagaoka2000}. Foundational work by \citet{AmariNagaoka2000} demonstrated that a parametric family of probability distributions can be viewed as a smooth manifold endowed with a canonical Riemannian metric, the Fisher-Rao metric, and a pair of dually-coupled affine connections. This geometric structure is not arbitrary; it can be intrinsically derived from statistical divergence functions, such as the Kullback-Leibler (KL) divergence, which serves as a measure of dissimilarity between distributions.

When applying these geometric tools to deep learning, it is crucial to draw a distinction between IG and the related field of geometric deep learning (GDL). GDL is primarily concerned with generalizing neural network architectures to operate on data that resides in non-Euclidean domains, such as graphs or manifolds; its focus is the geometry of the input data space. In contrast, IG has traditionally been used to analyze the geometry of the parameter space of a model. By viewing the set of all possible model parameters as a manifold, IG provides sophisticated tools for understanding the dynamics of training, offering a more nuanced perspective on optimization than that afforded by standard $\ell_1$ or $\ell_2$ regularization \citep{AmariNagaoka2000}.

\paragraph{Towards Categorical Information Geometry.}
While the algebraic and geometric approaches have largely evolved independently, a new frontier is emerging at their intersection. Recent work has begun to explicitly forge a synthesis, aiming to create a categorical information geometry \citep{Perrone2023Geo}. This research program, led by researchers such as \citet{Perrone2023Geo}, seeks to enrich the abstract, compositional structures of Markov categories with the metric and quantitative notions central to information theory and geometry, such as entropy and divergence \citep{Perrone2022Ent, Perrone2023Geo}.

This emerging synthesis recognizes that a complete theoretical picture requires both the compositional language of categories and the metric language of geometry. However, to date, the applications of this nascent field have focused primarily on reformulating abstract probability theory. The critical connection to the analysis of learned representations in practical, large-scale deep learning models remains largely unexplored. This work aims to bridge that gap, demonstrating that a categorical information geometry provides the ideal framework for analyzing the structure of the representation spaces sculpted by the learning process.

\subsection{The Geometry of Learning and Representation}

\paragraph{Objectives of Learning Representations.}
A guiding principle for understanding the purpose of representation learning is the Information Bottleneck (IB) theory, introduced by \citet{Tishby1999}. The IB principle posits that an optimal representation $T$ of some input data $X$ should be a bottleneck that is maximally informative about a relevant target variable $Y$ while being maximally compressed with respect to the input $X$ \citep{Tishby1999, ShwartzZiv2017}. This trade-off between predictive accuracy and compressional complexity provides a powerful, high-level objective for representation learning.

The IB framework has been particularly influential in the theoretical analysis of deep learning. It led to the hypothesis that the training of deep neural networks proceeds in two distinct phases: an initial fitting phase, where the mutual information $I(T; Y)$ between the representation and the target increases, followed by a ``compression'' phase, where the mutual information $I(X; T)$ between the input and the representation decreases \citep{ShwartzZiv2017}. While the universality of this two-phase dynamic is a subject of ongoing debate, the core intuition—that effective training involves not just memorization but also a form of structured compression—provides a compelling motivation for investigating the geometry of the learned representations.

\paragraph{Measurements of Representation Geometry.}
To move from high-level principles to concrete analysis, we require quantitative tools to probe the geometric properties of the high-dimensional activation spaces within a neural network. A particularly effective set of tools for this purpose comes from spectral graph theory. Given a graph with adjacency matrix $A$ and degree matrix $D$, the Graph Laplacian is defined as $L = D - A$ \citep{berahmand2025comprehensive}. For any function $f$ defined on the nodes of the graph (e.g., a feature activation), the quadratic form $f^\top L f$ defines the graph's Dirichlet energy. This quantity measures the smoothness of the function with respect to the graph structure; a low Dirichlet energy indicates that connected nodes have similar function values \citep{park2024iclr}.

This concept has a rich history in machine learning, forming the basis of spectral clustering algorithms \citep{berahmand2025comprehensive} and, more recently, being used to analyze and mitigate the over-smoothing problem in Graph Neural Networks (GNNs). Over-smoothing occurs when stacking many GNN layers causes the representations of all nodes to converge to an indistinguishable point, a phenomenon characterized by the Dirichlet energy of the representations collapsing to zero.

Most critically for the present work, this classical geometric tool is deeply relevant to the internal dynamics of the most advanced models. A recent study demonstrates that during in-context learning, Large Language Models (LLMs) dynamically reorganize their internal concept representations in a manner that explicitly minimizes the Dirichlet energy with respect to an implicit graph structure defined by the context \citep{park2024iclr}. This groundbreaking result elevates Dirichlet energy from a tool for analyzing explicit graphs to a general principle governing the emergent geometry of learned representations. This trend towards spectral analysis is further evidenced by other recent work using methods like Centered Kernel Alignment (CKA) to track representation dynamics and spectral editing of activations (SEA) to control model behavior \citep{qiu2024spectral}.

\paragraph{Mechanisms of Learning Representations.}
The geometric structures observed in learned representations are not accidental; they are a direct consequence of the implicit biases of the training algorithm. For modern, highly overparameterized models, the optimization process itself, typically driven by variants of gradient descent, imparts an implicit bias or implicit regularization on the final solution~\citep{vardi2023implicit}. Even when multiple parameter settings can achieve zero training error, the optimization algorithm preferentially converges to a ``simple'' solution that generalizes well. For linear models trained on classification tasks, this bias often corresponds to finding the maximum-margin separator, a classic geometric concept \citep{Gunasekar2018, Soudry2018, Chizat2020}.

A useful modern perspective is that training with the standard Negative Log-Likelihood (NLL) objective has a contrastive-like component. The NLL loss for a sample $(x, y_{\text{true}})$, given by $-\log P(y_{\text{true}} | x) = -\log \frac{\exp(z_{\text{true}})}{\sum_j \exp(z_j)}$, is minimized by increasing the logit $z_{\text{true}}$ for the observed class while reducing its relative competition with other logits. This resembles a contrastive loss that pulls the representation of $x$ toward a positive target while pushing it away from competing targets, although the equivalence is exact only under specific modeling choices. This perspective links likelihood training to the broader literature on self-supervised contrastive learning, where objectives explicitly sculpt representation geometry.

\section{Conclusion}
\label{sec:conclusion}

In this work, we introduced a mathematical framework for analyzing the Autoregressive generation step in language models, leveraging the expressive power of Markov Categories. By modeling the process as a composition of Markov kernels, $k_{\mathrm{gen}, \theta} = \KernelLMHead \comp \KernelBB \comp \KernelEmb$, we established a foundation for a compositional, information-theoretic analysis. This allowed us to formally quantify the information surplus in hidden states, providing a clear theoretical rationale for the success of modern speculative decoding techniques like EAGLE.

More importantly, our framework clarifies why the negative log-likelihood (NLL) objective can shape useful internal structure. Under realizability and vanishing average KL, NLL matches the data's intrinsic conditional stochasticity, a process we measure with categorical entropy. Under a linear-softmax head, a calibrated quadratic surrogate to NLL yields a regression-to-predictive-prototypes objective whose whitened alignment form is a CCA/eigenproblem. By analyzing the information geometry of the prediction head via the pullback Fisher--Rao metric, we showed how training can emphasize directions of high predictive sensitivity and how, under explicit assumptions, these directions connect to a predictive-similarity operator. This provides a mathematically grounded explanation for how likelihood training can produce organized representations without explicit contrastive pairs.

This compositional, probabilistic, and information-geometric perspective offers a principled alternative to purely empirical or heuristic analysis, unifying concepts from information theory, geometry, and spectral methods to study the mechanisms behind large language models.

\section*{Acknowledgments} 

We thank anonymous reviewers for their constructive and helpful feedback. We used LLMs for word-editing as well as figure plots in this work.

\vspace{5ex}
\bibliographystyle{plainnat}
\bibliography{reference}

\clearpage
\appendix

\renewcommand{\appendixpagename}{\centering \huge Appendix}
\appendixpage

\startcontents[section]
\printcontents[section]{l}{1}{\setcounter{tocdepth}{2}}
\clearpage

\section{Full Proofs of Theorems}

\subsection{Proof of Theorem \ref{thm:nll_kl_equivalence} (NLL Minimization as Average KL Minimization)}
\label{app:proof_nll_kl_equivalence}

Let $p_x(\cdot) \defeq k_{\text{data}}(x, \cdot)$ denote the true conditional probability distribution $P_{\text{data}}(\cdot | x)$ for context $x = \mathbf{w}_{<t}$.
Let $q_{x, \theta}(\cdot) = k_{\mathrm{gen}, \theta}(x, \cdot)$ denote the model's conditional probability distribution $P_\Params(\cdot | x)$.
The context distribution is $p_{W_{<t}}$.

The cross-entropy loss is defined as:
\begin{align*} L_{\mathrm{CE}}(\Params) &= -\Expect_{(x, w) \sim P_{\text{data}}} [\log q_{x, \theta}(w)] \\ &= -\Expect_{x \sim p_{W_{<t}}} \left[ \Expect_{W \sim p_x(\cdot)} [\log q_{x, \theta}(W)] \right] \\ &= -\Expect_{x \sim p_{W_{<t}}} \left[ \sum_{w \in \VV} p_x(w) \log q_{x, \theta}(w) \right] \end{align*}

The average KL divergence is defined as:
\begin{align*} \mathcal{L}_{\mathrm{KL}}(\theta) &= \Expect_{x \sim p_{W_{<t}}} [ \KLDiv( p_x(\cdot) \,\|\, q_{x, \theta}(\cdot) ) ] \\ &= \Expect_{x \sim p_{W_{<t}}} \left[ \sum_{w \in \VV} p_x(w) \log \frac{p_x(w)}{q_{x, \theta}(w)} \right] \\ &= \Expect_{x \sim p_{W_{<t}}} \left[ \sum_{w \in \VV} p_x(w) \log p_x(w) - \sum_{w \in \VV} p_x(w) \log q_{x, \theta}(w) \right] \\ &= \Expect_{x \sim p_{W_{<t}}} [ -H(p_x(\cdot)) ] - \Expect_{x \sim p_{W_{<t}}} \left[ \sum_{w \in \VV} p_x(w) \log q_{x, \theta}(w) \right] \\ &= -H(W_t | W_{<t})_{\text{data}} + L_{\mathrm{CE}}(\Params) \end{align*}

where $H(p_x(\cdot))$ is the Shannon entropy of the distribution $p_x(\cdot)$, and \\$H(W_t | W_{<t})_{\text{data}} = \Expect_{x \sim p_{W_{<t}}} [H(p_x(\cdot))]$ is the average conditional Shannon entropy of the data generating process.

Rearranging gives:
\[
L_{\mathrm{CE}}(\Params) = \mathcal{L}_{\mathrm{KL}}(\theta) + H(W_t | W_{<t})_{\text{data}}
\]

Since $H(W_t | W_{<t})_{\text{data}}$ is a property of the data distribution and does not depend on the model parameters $\theta$, minimizing $L_{\mathrm{CE}}(\Params)$ with respect to $\theta$ is equivalent to minimizing $\mathcal{L}_{\mathrm{KL}}(\theta)$.

The KL divergence $\KLDiv(p \| q) \ge 0$ for any probability distributions $p, q$, with equality if and only if $p = q$. Therefore, the average KL divergence $\mathcal{L}_{\mathrm{KL}}(\theta) = \Expect_{x \sim p_{W_{<t}}} [ \KLDiv( p_x(\cdot) \,\|\, q_{x, \theta}(\cdot) ) ]$ is also non-negative, as it is an expectation of non-negative values. The minimum value is achieved if and only if $k_{\text{data}}(x, \cdot) = k_{\mathrm{gen}, \theta^*}(x, \cdot)$ for $p_{W_{<t}}$-almost every $x$. \qed

\subsection{Proof of Proposition \ref{prop:cat_equals_shannon} (Categorical Entropy equals Shannon Entropy)}

The pointwise categorical entropy for a single input $h$ and $D=\KLDiv$ is:
\[ \CatEnt_{\KLDiv}(\KernelLMHead)(h) = D_{\VV\tens\VV}\!\left(\copyop_\VV \comp \KernelLMHead(h,\cdot) \;\middle\|\; (\KernelLMHead(h,\cdot) \tens \KernelLMHead(h,\cdot))\right). \]
Let $p_h(\cdot) = \KernelLMHead(h,\cdot)$. The first distribution is $\sum_{w \in \VV} p_h(w) \delta_{(w,w)}$, which is supported on the diagonal of $\VV \times \VV$. The second is the product distribution $p_h \otimes p_h$.
The KL divergence is:
\begin{align*}
& \sum_{(w,w') \in \VV \times \VV} \left(\sum_{v \in \VV} p_h(v) \delta_{(v,v)}(w,w')\right) \log\left(\frac{\sum_{v \in \VV} p_h(v) \delta_{(v,v)}(w,w')}{p_h(w)p_h(w')}\right) \\
&= \sum_{w \in \VV} p_h(w) \log\left(\frac{p_h(w)}{p_h(w)p_h(w)}\right) \\
&= \sum_{w \in \VV} p_h(w) \log\left(\frac{1}{p_h(w)}\right) = -\sum_{w \in \VV} p_h(w) \log p_h(w) = H(p_h).
\end{align*}
The averaged categorical entropy is then $\AvgCatEnt_{\KLDiv}(\KernelLMHead; p_{H_t}) = \Expect_{h \sim p_{H_t}}[H(p_h)]$, which is precisely the definition of the conditional Shannon entropy $H(W_t \mid H_t)$. \qed

\subsection{Information Surplus Identity (Equation~\ref{eq:chain_rule_surplus})}
The chain rule for mutual information states that for random variables $A, B, C$:
\[ I(A; B, C) = I(A; B) + I(A; C \mid B). \]
Let $A = H_t$, $B = W_t$, and $C = W_{t+1:t+K-1}$. Substituting these into the chain rule gives:
\[ I(H_t; W_t, W_{t+1:t+K-1}) = I(H_t; W_t) + I(H_t; W_{t+1:t+K-1} \mid W_t). \]
Recognizing that $(W_t, W_{t+1:t+K-1})$ is the sequence $W_{t:t+K-1}$, we have:
\[ I(H_t; W_{t:t+K-1}) = I(H_t; W_t) + I(H_t; W_{t+1:t+K-1} \mid W_t). \]
This directly yields the identity in Equation~\ref{eq:chain_rule_surplus}, where the information surplus is identified as the conditional mutual information term. \qed

\subsection{Proof of Lemma \ref{lem:cat_entropy_continuity} and Theorem \ref{thm:entropy_convergence}}
\label{app:proof_entropy_convergence}

We give short self-contained proofs. Throughout, $\VV$ is finite so $\Delta^{|\VV|-1}$ is compact and $d_H$ is a metric on the simplex.

\paragraph{Proof of Lemma \ref{lem:cat_entropy_continuity}.}
Since $\Psi_D$ is continuous on the compact simplex, it is uniformly continuous and bounded; write $\|\Psi_D\|_\infty<\infty$.
Fix $\varepsilon>0$ and choose $\delta>0$ such that $d_H(p,q)\le \delta$ implies $|\Psi_D(p)-\Psi_D(q)|\le \varepsilon$.
Then for any $n$,
\[
\mathbb E\big|\Psi_D(P^{(n)})-\Psi_D(P^\star)\big|
\le \varepsilon + 2\|\Psi_D\|_\infty\;\mathbb P\!\big(d_H(P^{(n)},P^\star)>\delta\big)
\le \varepsilon + \frac{2\|\Psi_D\|_\infty}{\delta^2}\,\mathbb E\!\big[d_H(P^{(n)},P^\star)^2\big],
\]
where the last step is Markov. Taking $n\to\infty$ and using $\mathbb E[d_H^2]\to0$ yields the claim.

\paragraph{Proof of Theorem \ref{thm:entropy_convergence}.}
Let $X\sim p_{W_{<t}}$, $p_X=k_{\text{data}}(X,\cdot)$, and $q^{(n)}_X=k_{\mathrm{gen},\theta_n}(X,\cdot)$.
On a finite alphabet, $d_H^2(p,q)\le \tfrac12\KLDiv(p\|q)$, so
\[
\mathbb E\big[d_H(p_X,q^{(n)}_X)^2\big]
\le \tfrac12\,\mathbb E\big[\KLDiv(p_X\|q^{(n)}_X)\big]
=\tfrac12\,\mathcal L_{\mathrm{KL}}(\theta_n)\xrightarrow[n\to\infty]{}0.
\]
Apply Lemma~\ref{lem:cat_entropy_continuity} with $P^{(n)}=q^{(n)}_X$ and $P^\star=p_X$ to get
$\mathbb E[\Psi_D(q^{(n)}_X)]\to \mathbb E[\Psi_D(p_X)]$.
Finally, since $H_t^{(\theta_n)}=f_{\mathrm{enc},\theta_n}(X)$ and $k_{\text{head},\theta_n}(H_t^{(\theta_n)},\cdot)=k_{\mathrm{gen},\theta_n}(X,\cdot)$,
\[
\mathbb E[\Psi_D(q^{(n)}_X)]
=\AvgCatEnt_D(k_{\text{head},\theta_n};\,p_{H_t,\theta_n}),
\]
which proves the theorem. For $D=\KLDiv$, $\Psi_{\KLDiv}(p)=\ShannonEntropy(p)$ (Proposition~\ref{prop:cat_equals_shannon}), yielding the stated identification with $H(W_t\mid W_{<t})_{\text{data}}$.
At a realizable optimum, $k_{\text{data}}(x,\cdot)=k_{\text{head},\theta^\star}(f_{\mathrm{enc},\theta^\star}(x),\cdot)$ depends on $x$ only via $H=f_{\mathrm{enc},\theta^\star}(x)$, i.e., the data kernel factors through $H$, so $H$ is predictively sufficient.
\qed

\subsection{Proof of Theorem \ref{thm:implicit_contrastive_constraint} (Output Distribution Approximation Constraint)}

Let $p_x^{\mathrm{data}}=P_{\mathrm{data}}(\cdot|x)$ and $p_x^\theta=p_\theta(\cdot|x)$.
By the Hellinger--KL inequality under our convention,
\[
d_H(p_x^{\mathrm{data}},p_x^\theta)^2
\le \frac12 \KLDiv(p_x^{\mathrm{data}}\|p_x^\theta).
\]
Taking expectation over $x\sim\mu_{\mathrm{ctx}}$ gives
\[
\mathbb E_x d_H(p_x^{\mathrm{data}},p_x^\theta)^2
\le \frac12\mathcal L_{\mathrm{KL}}(\theta).
\]
For any pair $x,x'$, the triangle inequality gives
\[
\big|d_H(p_x^\theta,p_{x'}^\theta)-d_H(p_x^{\mathrm{data}},p_{x'}^{\mathrm{data}})\big|
\le
d_H(p_x^\theta,p_x^{\mathrm{data}})
+d_H(p_{x'}^\theta,p_{x'}^{\mathrm{data}}).
\]
Writing $\epsilon_x=d_H(p_x^\theta,p_x^{\mathrm{data}})$, Markov's inequality gives
\[
\mathbb P(\epsilon_X\ge\delta)
\le
\frac{\mathbb E[\epsilon_X^2]}{\delta^2}
\le
\frac{\mathcal L_{\mathrm{KL}}(\theta)}{2\delta^2}.
\]
For independent $X,X'$, a union bound gives
\[
\mathbb P(\epsilon_X+\epsilon_{X'}>2\delta)
\le
\frac{\mathcal L_{\mathrm{KL}}(\theta)}{\delta^2}.
\]
Combining the last two displays proves the random-pair statement in the theorem. \qed

\subsection{Proof of Corollary \ref{coroll:representation_separation} (Implicit Representation Separation)}

From \Cref{thm:implicit_contrastive_constraint}, if the model fits the data well, then for predictively dissimilar contexts $x, x'$, the distance between their model output distributions, $d_{\text{out}}(g_{\text{head}}(h_x), g_{\text{head}}(h_{x'}))$, must be large.

The interior of the output simplex $\Prob(\VV)$ is a Riemannian manifold endowed with the Fisher-Rao metric $g^{\text{FR}}$. The distance between two interior points, $p_1$ and $p_2$, is the infimum of the lengths of all smooth paths connecting them. The length of a path $\gamma: [0, 1] \to \Prob(\VV)$ is given by the integral:
\[ L(\gamma) = \int_0^1 \sqrt{g^{\text{FR}}_{\gamma(t)}(\gamma'(t), \gamma'(t))} \, dt. \]
The mapping $g_{\text{head}}: \RepSpace \to \Prob(\VV)$ allows us to map paths from the representation space to the output space. Consider the straight-line path in representation space connecting $h_{x'}$ to $h_x$:
\[ h(t) = (1-t)h_{x'} + t h_x, \quad \text{for } t \in [0, 1]. \]
The tangent vector to this path is constant: $h'(t) = h_x - h_{x'}$.
This path in $\RepSpace$ induces a corresponding path in $\Prob(\VV)$ given by $\gamma(t) = g_{\text{head}}(h(t))$. The tangent vector to this induced path is found using the chain rule:
\[ \gamma'(t) = J_{g_{\text{head}}}(h(t)) \cdot h'(t) = J_{g_{\text{head}}}(h(t)) \cdot (h_x - h_{x'}), \]
where $J_{g_{\text{head}}}(h)$ is the Jacobian of the map $g_{\text{head}}$ evaluated at $h$.

The squared length of this tangent vector at point $\gamma(t)$ is given by the quadratic form of the metric $g^{\text{FR}}$:
\begin{align*}
g^{\text{FR}}_{\gamma(t)}(\gamma'(t), \gamma'(t)) &= g^{\text{FR}}_{g_{\text{head}}(h(t))}(J_{g_{\text{head}}}(h(t))(h_x - h_{x'}), J_{g_{\text{head}}}(h(t))(h_x - h_{x'})) \\
&= (h_x - h_{x'})^\top \left[ J_{g_{\text{head}}}(h(t))^\top g^{\text{FR}}_{g_{\text{head}}(h(t))} J_{g_{\text{head}}}(h(t)) \right] (h_x - h_{x'}).
\end{align*}
The term in the square brackets is precisely the definition of the pullback metric tensor $g^*$ evaluated at the point $h(t) \in \RepSpace$. Thus, the squared length of the tangent vector is:
\[ g^{\text{FR}}_{\gamma(t)}(\gamma'(t), \gamma'(t)) = g^*_{h(t)}(h_x - h_{x'}, h_x - h_{x'}). \]
The total length of this specific path in $\Prob(\VV)$ is therefore:
\[ L(\gamma) = \int_0^1 \sqrt{g^*_{h(t)}(h_x - h_{x'}, h_x - h_{x'})} \, dt. \]
The Riemannian distance $d_{\mathrm{FR}}(g_{\text{head}}(h_x), g_{\text{head}}(h_{x'}))$ is the infimum of path lengths, so it is bounded above by the length of our chosen path:
\[ d_{\text{FR}}(g_{\text{head}}(h_x), g_{\text{head}}(h_{x'})) \le \int_0^1 \sqrt{g^*_{h(t)}(h_x - h_{x'}, h_x - h_{x'})} \, dt. \]
By Cauchy--Schwarz,
\[
d_{\mathrm{FR}}\!\big(g_{\text{head}}(h_x), g_{\text{head}}(h_{x'})\big)^2
\le
\int_0^1 g^*_{h(t)}(h_x-h_{x'},h_x-h_{x'})\,dt .
\]
On the simplex, $d_H$ and $d_{\mathrm{FR}}$ are related exactly by
\[
d_H(p,q)=\sqrt{2}\sin\!\big(d_{\mathrm{FR}}(p,q)/4\big),
\]
so large Hellinger separation of the model outputs implies large Fisher--Rao separation. Together with Theorem~\ref{thm:implicit_contrastive_constraint}, predictively dissimilar data conditionals therefore force the integrated pullback quadratic form above to be large on typical well-fit pairs. Equivalently, the difference vector $v=h_x-h_{x'}$ must be visible along directions where the head has nontrivial predictive sensitivity along the path.

Conversely, if contexts are predictively similar and the model fits them well, then the output separation is small. This permits representations to differ in directions invisible to the head, but it discourages unnecessary separation along directions where $g^*$ is large. \qed

\subsection{Proof of Proposition \ref{prop:nll_dirichlet} (NLL Objective and Implicit Dirichlet Energy Minimization)}

The Dirichlet energy is
\[
\mathcal{E}_K(\phi_v)=\frac12\iint K(x,x')\langle h_x-h_{x'},v\rangle^2\,\mu_{ctx}(\dd x)\mu_{ctx}(\dd x').
\]
By Cauchy--Schwarz,
\[
\mathcal{E}_K(\phi_v)
\le \frac12\|v\|^2\iint K(x,x')\|h_x-h_{x'}\|^2\,\dd\mu(x)\dd\mu(x').
\]
The local lower Lipschitz condition and $d_{\mathrm{FR}}(p,q)\le \pi\sqrt2\,d_H(p,q)$ imply
\[
\|h_x-h_{x'}\|^2
\le \frac{2\pi^2}{m^2}\,
d_H(p_\theta(\cdot|x),p_\theta(\cdot|x'))^2.
\]
This proves the first inequality in Proposition~\ref{prop:nll_dirichlet}. For the second, write
\[
e_x=d_H(P_{\mathrm{data}}(\cdot|x),p_\theta(\cdot|x)).
\]
The triangle inequality gives
\[
d_H(p_\theta(\cdot|x),p_\theta(\cdot|x'))
\le d_H(P_{\mathrm{data}}(\cdot|x),P_{\mathrm{data}}(\cdot|x'))+e_x+e_{x'}.
\]
Squaring and using $(a+b+c)^2\le 3a^2+6b^2+6c^2$, then integrating against bounded $K$, yields
\[
\iint K\,d_H(p_\theta^x,p_\theta^{x'})^2
\le
3\iint K\,d_H(P_{\mathrm{data}}^x,P_{\mathrm{data}}^{x'})^2
+12\|K\|_\infty\,\mathbb E[e_X^2].
\]
Since $\mathbb E[e_X^2]=\varepsilon$, the stated bound follows. \qed

\subsection{Well-posedness of the Predictive Similarity Operator \texorpdfstring{$M_{\widetilde K}$}{M K}}
The operator $M_{\widetilde K}: L^2(\RepSpace, \mu) \to L^2(\RepSpace, \mu)$ is defined by the integral $(M_{\widetilde K}\psi)(h) = \int_{\RepSpace} \widetilde{K}(h, h') \psi(h') \mu(\dd h')$.
For $M_{\widetilde K}$ to be a compact self-adjoint operator, its kernel $\widetilde{K}(h, h')$ must satisfy certain conditions.
\begin{enumerate}
\item \textbf{Symmetry:} Since the original kernel $K(x, x')$ is assumed to be symmetric, the disintegrated kernel $\widetilde{K}(h, h') = \mathbb{E}[K(X, X') \mid f(X)=h, f(X')=h']$ is also symmetric.
\item \textbf{Square-Integrability:} The space $(\RepSpace, \mu)$ is a probability space. A sufficient condition for compactness on a probability space is that the kernel is square-integrable, i.e., \\$\iint |\widetilde{K}(h, h')|^2 \mu(\dd h) \mu(\dd h') < \infty$. As we assumed $K$ is a bounded function, its conditional expectation $\widetilde{K}$ is also bounded. A bounded measurable function on a finite measure space is always square-integrable.
\end{enumerate}
Since $\widetilde{K}$ is symmetric and square-integrable with respect to the probability measure $\mu$, it is a Hilbert-Schmidt kernel. Every Hilbert-Schmidt integral operator is compact. Because the kernel is also real and symmetric, the operator is self-adjoint. By the spectral theorem for compact self-adjoint operators, $M_{\widetilde K}$ has a real point spectrum away from zero with possible accumulation only at zero; including the zero-eigenspace, one obtains an orthonormal eigenbasis for $L^2(\RepSpace,\mu)$. \qed

If, moreover, $K$ is PSD in the quadratic-form sense, then $\widetilde K$ is PSD by Lemma~\ref{lem:psd_disintegration}. Hence for every $\psi\in L^2(\RepSpace,\mu)$ we have $\langle \psi, M_{\widetilde K}\psi\rangle \ge 0$. Thus $M_{\widetilde K}$ is positive semidefinite and its spectrum lies in $[0,\infty)$.

\subsection{Proof of Theorems~\ref{thm:fenchel_young_tau} and~\ref{thm:nll_spectral_cca}}
\label{app:proof_nll_spectral_alignment}
The proof separates the analytic surrogate from the CCA algebra.

\begin{proof}
For Theorem~\ref{thm:fenchel_young_tau}, the linear-softmax likelihood has
\[
-\log p_\theta(W|X)
=-\langle g(W),h_X\rangle-b_W+\log\sum_{w}\exp\{\langle g(w),h_X\rangle+b_w\}.
\]
Young's inequality with parameter $\tau$ gives
\[
\langle g(w),h_X\rangle
\le \frac{1}{2\tau}\|g(w)\|^2+\frac{\tau}{2}\|h_X\|^2 .
\]
Substitution into the log-partition function gives
\[
\log\sum_w e^{\langle g(w),h_X\rangle+b_w}
\le
C_\tau+\frac{\tau}{2}\|h_X\|^2,
\quad
C_\tau=\log\sum_w e^{b_w+\|g(w)\|^2/(2\tau)}.
\]
Taking conditional expectation over $W|X$ yields
\[
\mathbb E[-\log p_\theta(W|X)]
\le C_\tau-\mathbb E[b_W]-\mathbb E\langle \bar g_X,h_X\rangle+\frac{\tau}{2}\mathbb E\|h_X\|^2.
\]
Completing the square gives
\[
\frac{\tau}{2}\mathbb E\left\|h_X-\frac1\tau\bar g_X\right\|^2
-\frac{1}{2\tau}\mathbb E\|\bar g_X\|^2
+C_\tau-\mathbb E[b_W].
\]
This proves the surrogate statement.

For Theorem~\ref{thm:nll_spectral_cca}, assume $h_X$ and $\bar g_X$ are centered. The regression surrogate above by itself does not identify a spectral basis; the CCA basis appears after imposing variance normalization on linear functionals of $h_X$ and $\bar g_X$. For fixed $u$, maximizing
\[
\frac{u^\top C_{h\bar g}v}{\sqrt{u^\top C_{hh}u}\sqrt{v^\top C_{\bar g\bar g}v}}
\]
over $v$ gives $v\propto C_{\bar g\bar g}^{-1}C_{h\bar g}^\top u$ and value squared
\[
\frac{u^\top C_{h\bar g}C_{\bar g\bar g}^{-1}C_{h\bar g}^\top u}
{u^\top C_{hh}u}.
\]
The stationary points of this Rayleigh quotient satisfy
\[
C_{h\bar g}C_{\bar g\bar g}^{-1}C_{h\bar g}^\top u=\lambda C_{hh}u,
\]
and the top constrained directions are the top generalized eigenvectors by the Courant--Fischer variational principle. In whitened coordinates $C_{hh}=I$, this is the ordinary symmetric eigenproblem stated in the main text. The scalar $\tau$ from the surrogate rescales the prototype side and cancels from these normalized CCA directions. 
\end{proof}

\end{document}